\definecolor{LightGreen}{rgb}{0.8,1,0.8}
\definecolor{LightBlue}{rgb}{0.8,0.8,1}
\theoremstyle{plain}
\newtheorem{theorem}{Theorem}[section]
\theoremstyle{definition}
\newtheorem{definition}[theorem]{Definition}
\theoremstyle{remark}
\newcommand{\Name}{A3S}
\def\##1\#{\begin{align}#1\end{align}}
\def\$#1\${\begin{align*}#1\end{align*}}
\def\knn{\mathrm{knn}}
\def\HH{\mathbb{H}}
\def\II{\mathbb{I}}
\def\PP{\mathbb{P}}
\icmltitlerunning{\Name: A General Active Clustering Method with Pairwise Constraints}
\begin{document}

\twocolumn[
\icmltitle{\Name: A General Active Clustering Method with Pairwise Constraints}



\icmlsetsymbol{equal}{*}

\begin{icmlauthorlist}
\icmlauthor{Xun Deng}{s1,comp}
\icmlauthor{Junlong Liu}{comp}
\icmlauthor{Han Zhong}{s2}
\icmlauthor{Fuli Feng}{s1}
\icmlauthor{Chen Shen}{comp}
\icmlauthor{Xiangnan He}{s1}
\icmlauthor{Jieping Ye}{comp}
\icmlauthor{Zheng Wang}{comp}
\end{icmlauthorlist}

\icmlaffiliation{comp}{Alibaba Group}
\icmlaffiliation{s1}{University of Science and Technology of China}
\icmlaffiliation{s2}{Peking University}

\icmlcorrespondingauthor{Zheng Wang}{wz388779@alibaba-inc.com}
\icmlcorrespondingauthor{Fuli Feng}{fulifeng93@gmail.com}

\icmlkeywords{Machine Learning, ICML}

\vskip 0.3in
]



\printAffiliationsAndNotice{}  

\begin{abstract}
Active clustering aims to boost the clustering performance by integrating human-annotated pairwise constraints through strategic querying. Conventional approaches with semi-supervised clustering schemes encounter high query costs when applied to large datasets with numerous classes. To address these limitations, we propose a novel \underline{A}daptive \underline{A}ctive \underline{A}ggregation and \underline{S}plitting (\Name) framework, falling within the cluster-adjustment scheme in active clustering. \Name{} features strategic active clustering adjustment on the initial cluster result, which is obtained by an adaptive clustering algorithm. In particular, our cluster adjustment is inspired by the quantitative analysis of Normalized mutual information gain under the information theory framework and can provably improve the clustering quality. The proposed \Name{} framework significantly elevates the performance and scalability of active clustering. 
In extensive experiments across diverse real-world datasets, \Name{} achieves desired results with significantly fewer human queries compared with existing methods. 
\end{abstract}

\section{Introduction}
\label{sec:intro}

In the realm of data science, clustering algorithms have emerged as a cornerstone technology within the domain of unsupervised learning~\cite{khanum2015survey,celebi2016unsupervised}. By automatically grouping similar data objects based on inherent structures and patterns within datasets, clustering provides an efficacious means to condense and structure complex information and is widely applied in image classification~\cite{caron2020unsupervised}, social network analysis~\cite{he2022sssnet}, etc. However, conventional clustering techniques often rely on static parameter settings and one-off computations, rendering them less adaptable to strange or expanding data environments (See Section \ref{methd:discu}). This context highlights the growing importance of human-computer collaborative active clustering approaches.

Active clustering refers to a paradigm that actively selects side information~\citep{Anand2014}, in the form of pairwise constraints, to maximally improve the clustering performance. Extensive work~\citep{gonzalez2023semi} has explored the combination of the strategic selection of pairwise constraints and semi-supervised clustering (SSC)~\citep{Basu_Banerjee_Mooney_2002}, and attains much lower query complexity compared to its semi-supervised counterpart~\citep{bilenko2004integrating}. 
Despite the practicality, current active clustering methods often suffer from high computational and query costs when the number of classes is large. 

SSC-based active clustering methods primarily assess the uncertainty of all pairwise constraints, and iteratively choose the most uncertain ones for expert queries. This manner, while systematic, faces notable challenges: 
it potentially relies on the assumption that a small set of initial pairwise constraints will rapidly cover most real classes. This assumption becomes increasingly unreliable in scenarios with large sample numbers 
$N$ and class numbers $K$. Therefore, some active clustering methods~\citep{van2018cobra,shi2020fast} shift from SSC to a cluster-adjustment scheme. This scheme involves over-clustering data into $k$ clusters via a specific clustering method (where $k$ is greater than $K$), and subsequently aggregates the resulting small clusters into larger ones based on pairwise constraints. However, it requires a proper cluster number $k$ as an input parameter, which is hard to determine in real applications. Moreover, the human query may be misleading when the selected samples are outliers, as they do not represent the majority sample of a cluster.

\looseness=-1 This work aims to overcome the drawbacks of existing cluster adjustment schemes. We first present a theoretical result that identifies conditions where aggregating two clusters does not reduce the normalized mutual information (NMI) between the resulting clustering and the real clustering. Here, NMI measures the overlap between two clustering results, with larger values indicating better performance (see Definition~\ref{def:nmi}). In addition, to guide active human queries, we quantify the impact of merging two clusters on the expected NMI value difference from the theoretical side. This characterization enables us to actively select cluster pairs that maximize the NMI gain.   


\looseness=-1 Building upon these theoretical results, we propose Adaptive Active Aggregation and Splitting (\Name), a generic cluster adjustment framework for active clustering. \Name{} operates in two stages: the \emph{adaptive initialization stage} and the \emph{active aggregation and splitting stage}. The first stage autonomously identifies an appropriate cluster number and produces initial clustering results using a designated clustering algorithm (e.g., K-means and hierarchical clustering). In the latter stage, \Name{} proactively identifies the cluster pair that is expected to enhance the NMI value mostly. Afterward, it assesses whether the majority of samples in a cluster are of the same class (i.e., cluster purity in Definition~\ref{def:purity}), then merges pure clusters queried to be in the same class by oracles, and divides impure clusters into pure subclusters and outliers. This stage will be repeated a few times to ensure convergence, where no more aggregation and splitting will happen. 

Our contributions are summarized as follows:
\vspace{-10pt}
\begin{itemize}[leftmargin=*] 



    \item We introduce A3S, a general active clustering algorithm that improves the quality of clustering by optimizing the NMI value. A3S strategically selects cluster pairs for aggregation, aiming to maximize the expected NMI improvement within a limited query cost. The quantification of NMI gain is guided by our information-theoretical analysis (Theorem 2.4). Additionally, A3S implements precise splitting on clusters that fail purity tests, ensuring the correction of outlier samples.

    \item Regarding implementation superiority, A3S can reveal the underlying ground truth clustering structure with substantially fewer human queries compared to traditional methods, especially when prior dataset knowledge is unknown. In addition, A3S locally adjusts the cluster labels, offering a more efficient alternative to rerunning semi-supervised clustering algorithms, and remains feasible for large datasets.

    \item In terms of practical performance, by conducting comprehensive evaluations on various real-world datasets, we demonstrate that A3S consistently surpasses all baseline models. Notably, baseline methods typically require more than 4000 queries to achieve high-quality clustering results for datasets containing thousands of samples. In contrast, A3S achieves this performance with only a few hundred queries.

\end{itemize}

\section{Methodology}
\label{sec:method}

\begin{figure*}[t]
  \centering
  \includegraphics[width=\textwidth]{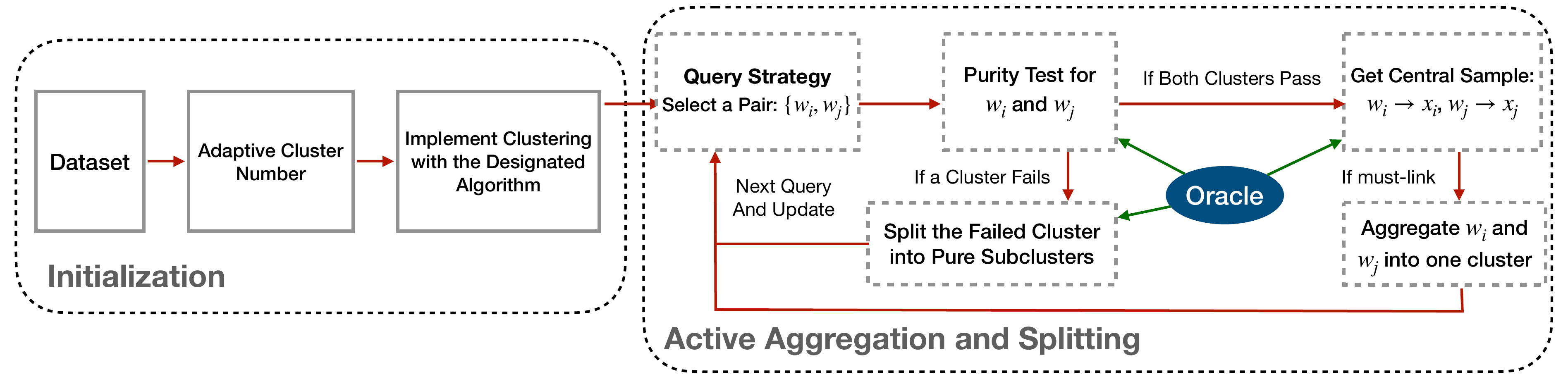}
  \vspace{-15pt}
  \caption{The workflow of \Name{} which consists of the adaptive clustering stage and active aggregation and splitting stage.}
  \label{fig:workflow}
  \vspace{-5pt}
\end{figure*}

\subsection{Notation and Definition}
\begin{definition}[Active Clustering]\label{def:ac}
    We denote the true classes of $N$ samples $X = \{x_1, \cdots, x_N\}$ by $Y=\{y_1, \cdots, y_N\}$, where $y_i\in \{1,\cdots,K\}$ and $K$ is the number of classes. Let the ground truth clustering of $X$ as $C=\{c_1,\cdots,c_K\}$, and the initial clustering result as $\Omega=\{w_1,\cdots,w_k\}$, where $X=\cup_{i=1}^{K}c_i=\cup_{i=1}^{k}w_i$ and $c_i\cap c_j=\emptyset, w_i\cap w_j=\emptyset, \forall i,j$.
    Active Clustering strategically selects sample pairs $(x_i,x_j)$, and requires the oracle to judge if $y_i=y_j$ (two samples are must-linked) or $y_i\neq y_j$ (two samples are cannot-linked). It then updates the clustering result $\Omega$ with the queried pairwise constraints accordingly. Active Clustering aims to utilize the queried constraints to maximally reduce the difference between $C$ and $\Omega$, which is measured by the normalized mutual information (NMI)~\citep{kvalseth1987entropy,vinh2009information}.
\end{definition}

\begin{definition}[Cluster Adjustment Scheme]\label{def:sche}
    We define a cluster adjustment scheme as a label update strategy employed by active clustering algorithms. Specifically, it entails the algorithm's process of either locally aggregating small clusters into larger ones or splitting impure clusters into several subclusters, guided by pairwise constraints.
\end{definition}

\begin{definition}[NMI]\label{def:nmi}
    NMI is a measure of how much common information two clustering results share. Given $N$ samples and their two clusterings $\Omega = \{w_1, \cdots, w_k\}$ and $\Omega' = \{w'_1, \cdots, w'_{k}\}$, we define the NMI value as 
    \small
    \$
    n = \frac{2 \mathbb{I}(\Omega; \Omega')}{\HH(\Omega) + \HH(\Omega')} = \frac{2 \mathbb{I}(\zeta; \zeta')}{\HH(\zeta) + \HH(\zeta')} ,
    \$
    \normalsize
    where $\zeta = (|w_1|/N, \cdots, |w_k|/N)$ and $\zeta' = (|w'_1|/N, \cdots, |w'_{k'}|/N)$ are two distributions induced by $\Omega$ and $\Omega'$, respectively. Here, $\mathbb{I}(\cdot;\cdot)$ denotes mutual information, and $\HH(\cdot)$ denotes entropy.
\end{definition}

We also introduce the notion of clustering purity~\citep{gonzalez2023semi}.

\begin{definition}[Purity]\label{def:purity}
    We define the \emph{dominant class} of a cluster $w_i$ as $\arg \max_j |w_i\cap c_j|$. We label the sample in a cluster that does not belong to its dominant class or the sample that is a single cluster itself as an outlier. Then, the purity of $w_i$ is $\max_j \frac{|w_i\cap c_j|}{|w_i|}$, and the purity of the clustering result $\Omega$ is quantified by $\frac{\sum_i\max_j|w_i\cap c_j|}{N}$.
\end{definition}

\subsection{Theoretical Analysis}
Active clustering that adopts the cluster-adjustment scheme does not rely on semi-supervised clustering for updating clustering results. Instead, it emphasizes the use of must-link and cannot-link constraints as indicators to merge or split clusters~\citep{van2018cobras}. This greedy strategy may, however, jeopardize the quality of clustering outcomes for several reasons. For example, the queried samples could be outliers, or the purity of a cluster might not be sufficiently high. In light of this, we introduce a pivotal theorem that offers clear guidance for aggregation actions. This is achieved through an evaluation of the NMI.

\begin{theorem}[Guarantee for Cluster Aggregation] \label{lemma:purity}
Denote the clustering of $N$ samples as $\Omega$, the ground truth clustering as $C$, and the NMI value of $\Omega$ with respect to $C$ as $n_1$. For any two clusters in $\Omega$, say $w_1$ and $w_2$, suppose they have a common dominant class $c_1$ with purities $t_1$ and $t_2$ respectively, where $t_1, t_2 \in [p,1]$. By aggregating $w_1$ and $w_2$ into a new cluster $w_{1,2}=w_1\cup w_2$, we arrive at a new clustering $\Omega^{\star}=\{w_{1,2}, w_3, \cdots,w_k\}$ with NMI value of $n_2$. This aggregation positively impacts clustering performance (i.e., $n_2 \ge n_1$) if $p\ge 0.7$ and $n_1 \ge 2 \cdot (1.0586-\min\{t_1, t_2\})$.
\end{theorem}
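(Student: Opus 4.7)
The plan is to reduce the claim $n_2 \geq n_1$ to a single inequality between two local quantities: the entropy drop $\Delta H := \HH(\Omega) - \HH(\Omega^{\star})$ and the mutual-information drop $\Delta I := \II(\Omega; C) - \II(\Omega^{\star}; C)$. Both are non-negative, because merging two cells of a partition weakly decreases the partition entropy and, by concavity of entropy, weakly decreases $\II(\,\cdot\,; C)$ as well. Expanding $n_i = 2\II_i/(\HH_i^{\Omega} + \HH(C))$ and cross-multiplying, a short algebraic manipulation shows
\$
n_2 \geq n_1 \;\iff\; \Delta I / \Delta H \leq n_1 / 2.
\$
Everything therefore reduces to a purity-dependent upper bound on the ratio $\Delta I / \Delta H$.

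Next, I would compute both quantities in closed form. Set $q_i := |w_i|/N$, $q_{12} := q_1+q_2$, $\alpha := q_1/q_{12}$, and let $p_i := (|w_i\cap c_j|/|w_i|)_j$ be the class distribution inside cluster $w_i$. A direct calculation yields $\Delta H = q_{12}\, h(\alpha)$, where $h$ is the binary entropy, and
\$
\Delta I = q_{12}\bigl[\HH(p_{12}) - \alpha \HH(p_1) - (1-\alpha)\HH(p_2)\bigr] = q_{12}\,\mathrm{JSD}_\alpha(p_1,p_2),
\$
with $p_{12} = \alpha p_1 + (1-\alpha) p_2$ and $\mathrm{JSD}_\alpha$ denoting the $\alpha$-weighted Jensen-Shannon divergence. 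The factors of $q_{12}$ cancel, so $\Delta I / \Delta H = \mathrm{JSD}_\alpha(p_1, p_2)/h(\alpha)$, a purely local ratio depending only on the two cluster-conditional distributions and on the mixing weight $\alpha$.

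The core step is then the analytic inequality
\$
\mathrm{JSD}_\alpha(p_1, p_2)/h(\alpha) \leq 1.0586 - \min\{t_1, t_2\},
\$
valid whenever both $p_1, p_2$ have their largest coordinate at class $c_1$ with values in $[p, 1]$, $p \geq 0.7$, and $\alpha \in (0,1)$. To prove it, I would first perform a symmetrization reduction: since only the first coordinate of each $p_i$ is constrained, the ratio is maximized when the residual masses $1-t_1$ and $1-t_2$ are placed on disjoint non-dominant classes, as this choice pushes $\HH(p_{12})$ up while leaving $\HH(p_i)$ fixed. After this reduction, $\mathrm{JSD}_\alpha(p_1,p_2)$ becomes an explicit function of $(t_1, t_2, \alpha)$, and a one-variable maximization over $\alpha$, combined with monotonicity in $t := \min\{t_1, t_2\}$ and the floor $t \geq 0.7$, yields the numerical constant $1.0586$.

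Combining the two parts closes the proof: the hypothesis $n_1 \geq 2(1.0586 - \min\{t_1,t_2\})$ implies $n_1/2 \geq \Delta I/\Delta H$, which is equivalent to $n_2 \geq n_1$. The main obstacle is the analytic bound on $\mathrm{JSD}_\alpha/h(\alpha)$: the trivial estimate $\mathrm{JSD}_\alpha \leq h(\alpha)$ only gives $1$, which is much too weak, so extracting the purity-dependent improvement requires both the convexity/symmetrization argument that identifies the disjoint-support extremizer and a delicate one-dimensional estimate that pins down the constant $1.0586$ and explains why the $0.7$ floor on $p$ is necessary.
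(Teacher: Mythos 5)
Your proposal is correct in outline and shares the paper's overall skeleton, but it executes the key technical step by a genuinely different route. Both arguments reduce $n_2\ge n_1$ to the same comparison: writing $\Delta H=\HH(\Omega)-\HH(\Omega^\star)$ and $\Delta I=\II(\Omega;C)-\II(\Omega^\star;C)$, cross-multiplication gives $n_2\ge n_1\iff \Delta I/\Delta H\le n_1/2$, and the paper likewise computes $\Delta H$ exactly as your $q_{12}h(\alpha)$ (its $\Delta h$) and then establishes $\Delta I\le(1.0586-\min\{t_1,t_2\})\,\Delta H$. Where you diverge is in how that last bound is proved. The paper lower-bounds $\II(\Omega^\star;C)$ term by term: it discards the outlier cross-terms via $|w_{1,2}\cap c_{i_j}|\ge\max\{|w_1\cap c_{i_j}|,|w_2\cap c_{i_j}|\}$ (so its intermediate step is an inequality, not an identity), isolates a dominant-class term which it controls by Taylor-expanding $\log(1+x)$ to get $-0.0812\,p/N$, and bounds the outlier term by $(1-\min\{t_1,t_2\})\Delta h$; the constant $1.0586=1+0.0812/(2\log 2)$ then falls out of $\Delta h\ge 2\log 2\cdot p/N$. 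Your exact identity $\Delta I=q_{12}\,\mathrm{JSD}_\alpha(p_1,p_2)$ is cleaner and tighter, and your split of the JSD into a dominant-class Jensen gap (for $-x\log x$ on $[0.7,1]$) plus a residual term bounded by $(1-\min\{t_1,t_2\})h(\alpha)$ is the conceptual analogue of the paper's (IV) and (V). Two caveats. First, the two steps you flag as the real work --- the disjoint-support extremizer and the one-dimensional maximization --- are only sketched; they do go through (disjointness maximizes $\HH(p_{12})$ at fixed marginal entropies, and how the residual mass is spread is immaterial once supports are disjoint), but they need to be written out. Second, that optimization will not literally ``yield'' $1.0586$: the true supremum of $\mathrm{JSD}_\alpha/h(\alpha)$ over the constraint set appears to be strictly smaller than $1.0586-\min\{t_1,t_2\}$ (the Jensen gap of $-x\log x$ on $[0.7,1]$ with weights $\alpha,1-\alpha$ is at most roughly $0.065\,\alpha(1-\alpha)\le 0.024\,h(\alpha)$), so your route would in fact prove a marginally stronger statement; the paper's constant is an artifact of its looser Taylor estimates rather than the value of your extremal problem. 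What your approach buys is an exact, interpretable identity and a sharper constant; what the paper's buys is a fully elementary, self-contained numerical verification with no appeal to an extremal characterization.
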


Theorem~\ref{lemma:purity} delineates the conditions for non-deteriorating cluster aggregation, and the detailed proof is in Appendix~\ref{appendix:pf:purity}. Specifically, merging two clusters can achieve provable benefits when their purity is at least $0.7$ and preceding NMI exceeds $2 \cdot (1.0586-\min\{t_1, t_2\})$. This finding is important as it suggests that the purity requirement for cluster aggregation is relatively mild, allowing for the inclusion of a small number of outliers within each cluster without compromising the overall clustering performance. 

We take a step further by formulating the expected gain in NMI value when we decide to query a sample pair from a cluster pair. This involves aggregating clusters if the query result is ``must-link" and keeping them separated if the result is ``cannot-link". 

\begin{definition}[Expected NMI Gain]\label{propo:exp_nmi}
    Suppose the dominant class of clusters $w_i$ and $w_j$ is $c_m$ and $c_n$ respectively, which remain unknown before querying. Let $n_1$ and $n_2$ represent the NMI values of the clustering result before and after aggregating the two clusters. We denote $\PP(c_m=c_n)$ as the probability that the oracle observes a `must-link' result. Then, we define the expected NMI gain from this query as follows:
    \# \label{eq:delta:nmi}
    \mathbb{E}[\Delta \texttt{NMI} \mid w_i, w_j] = \PP(c_m=c_n) \cdot (n_2-n_1),
    \#  
    \vspace{-25pt}
\end{definition}
In what follows, we present how to estimate $\PP(c_m=c_n)$ and $(n_2-n_1)$.

We use $e_{st} = 1/0$ to signify if $y_s$ equals $y_t$ or not, and denote the posterior pairwise probability as $\PP(e_{st}=1)$. We estimate the pairwise probability following previous probability clustering methods~\citep[e.g.,][]{liu2022mpc}, and discuss the estimation details in Appendix~\ref{appendix:prob-esti}. Then we can express the aggregation probability $\PP(c_m=c_n)$ as follows:
\small
\begin{equation}\label{prob:ori}
    \PP(c_m=c_n)= \frac{\prod\limits_{s\in w_i,t\in w_j} \PP(e_{st}=1)}{\prod\limits_{s\in w_i,t\in w_j} \PP(e_{st}=1)+\prod\limits_{s\in w_i,t\in w_j} \PP(e_{st}=0)}, 
\end{equation}
\normalsize
The detailed derivation of Eq. \eqref{prob:ori} is in Appendix~\ref{appendix:pf:merge_prob}. 


Moving forward, we focus on formulating $n_2 - n_1$. 
In line with the notations used in Theorem~\ref{lemma:purity}, we define $\Delta h = \HH(\Omega) - \HH(\Omega^*)$ and proceed with the following result:
\#
n_2 = \frac{2\II(\Omega^{\star};C)}{\HH(\Omega^{\star})+\HH(C)} \approx \frac{2\II(\Omega;C)}{\HH(\Omega)+\HH(C)-\Delta h},\notag
\#
where we use the fact that $\II(\Omega^*; C) \approx \II(\Omega, C)$ when the purity of $w_i$ and $w_j$ is sufficiently large. 
Moreover, when the sizes of clusters $w_i$ and $w_j$ are significantly smaller than the sample size $N$, the direct calculation gives that $\Delta h \ll \HH(\Omega)+\HH(C)$ (refer to Appendix~\ref{appendix:coro} for verification). Hence, we have 
\#
n_2 -n_1  \label{del_nmi}
\approx \frac{2\II(\Omega;C)\Delta h}{(\HH(\Omega)+\HH(C))^2} \propto \Delta h,
\#
Combing Eq. \eqref{eq:delta:nmi}, Eq. \eqref{prob:ori}, and Eq. \eqref{del_nmi}, we obtain our estimators of $\mathbb{E}[\Delta \texttt{NMI}\mid w_i, w_j]$ as follows:
\begin{equation}\label{delta_nmi}
    \mathbb{E}[\Delta \texttt{NMI}\mid w_i, w_j] \propto \text{RHS of Eq.~\eqref{prob:ori}} \cdot \Delta h.
\end{equation}
where RHS denotes the right-hand side.

\subsection{Adaptive Active Aggregation and Splitting}\label{sec:algo_detail}

Building on the analysis of cluster aggregation and expected query impact, we detail our Adaptive Active Aggregation and Splitting framework, which comprises two distinct stages. First, the Adaptive Clustering stage is introduced in Section~\ref{sec:bmaas:adapt}, where we describe the generation of initial clustering results. Second, in Section~\ref{sec:bmaas:active}, we discuss selecting pairwise constraints and updating clustering during the Active Aggregation and Splitting stage. The \Name{} workflow is illustrated in Figure~\ref{fig:workflow}, and the steps are summarized in Algorithm~\ref{algo:apc}.

\begin{figure*}[t]
  \centering
  \includegraphics[width=\textwidth]{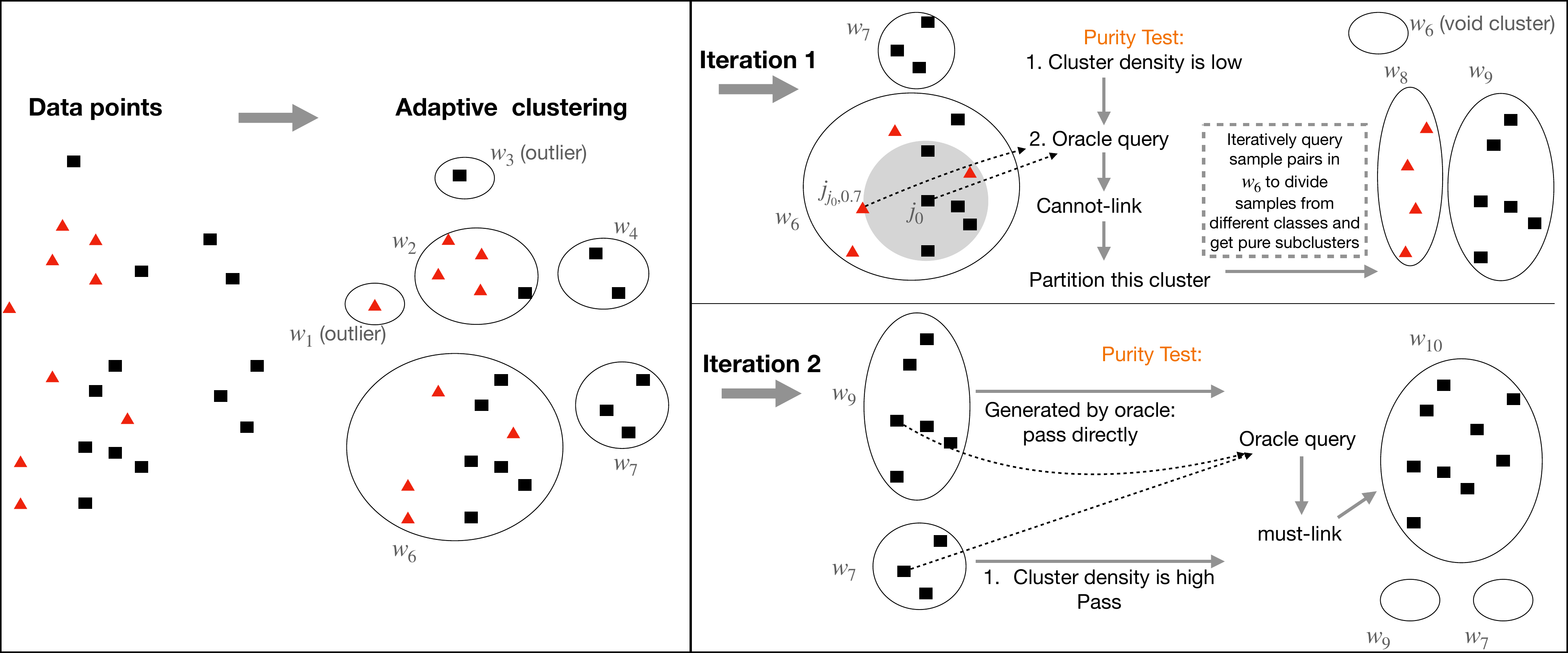}
  \vspace{-0.7cm}
  \caption{Case study for \Name{}. In iteration 1, the cluster $w_6$ does not pass the purity test and the oracle invests 12 queries to split it into two pure subclusters. In iteration 2, the query result for the two central samples is must-link and they are merged into one cluster.}
  \label{fig:case}
  \vspace{-5pt}
\end{figure*}

\subsubsection{Initialization via Adaptive Clustering}
\label{sec:bmaas:adapt}
Current active clustering methods often necessitate manually setting initial cluster numbers, a challenging task when the number of classes is unknown. In response, we propose using adaptive clustering methods to determine an appropriate cluster number for initialization. An adaptive clustering method organizes data based on local density, revealing the dataset's inherent structure. It handles noise by isolating each noisy sample into a separate cluster (i.e., outlier), avoiding unsuitable data grouping. This approach ensures a more natural and purer clustering outcome. Classic adaptive clustering methods include Probabilistic Clustering~\cite{liu2022mpc}, density-based clustering~\cite{zhang2021adaptive,khan2018adbscan}, among others. Additionally, the quality of adaptive clustering can be significantly enhanced in multi-view clustering scenarios~\cite{ljjslsmpc}. In this process, we aim to obtain a suitable reference for the number of clusters (typically larger than real class numbers), rather than the optimal clustering, hence do not require a precise hyper-parameter search. Once the adaptive cluster number is established, we can employ the desired clustering algorithm to produce the initial clustering result.

\subsubsection{Active Aggregation and Splitting}
\label{sec:bmaas:active}

\textbf{Query Strategy.}
To ensure a high success rate in establishing `must-link' connections among selected cluster pairs, We employ a two-step query strategy. The first step involves filtering out low-quality cluster pairs, focusing on those with aggregation probabilities at the top. In the second step, we utilize Eq. \eqref{delta_nmi} to calculate the expected NMI gain for these cluster pairs. Then we choose the pair with the highest NMI gain. By Theorem~\ref{lemma:purity}, it is necessary to evaluate the purity of the two chosen clusters and select one representative sample (i.e., it belongs to the dominant class of this cluster) from each cluster to form a sample pair. This pair will then be subjected to queries by oracles. To facilitate this process, we specially designed a purity test.


\vspace{-3pt}
\textbf{Purity Test.} 
For convenience, we employ the sphere structure to depict a cluster, where the centroid sample of a cluster is denoted as $j_0$. The other samples in the cluster are marked as $j_{i,\rho}$,  indicating that a sphere centered on sample $i$ with a radius of $d(i, j_{i,\rho})$ includes $\rho$ percent of the samples in the cluster. 
Considering that outlier samples typically reside in the outer regions of a cluster, and samples within impure clusters tend to be more sparsely distributed, we bifurcate the task of purity testing into two consecutive judgments. 

First, we evaluate how densely the samples are concentrated within a cluster. This assessment is formalized as the density test for a cluster $w$, expressed in Eq. \eqref{eq:dt}, where $\tau$ represents a pre-set threshold that determines the level of strictness in this density test, and $\mathbf{1}(\cdot)$ denotes the indicator function.
\#\label{eq:dt}
\mathcal{DT}(w)=\mathbf{1}\Big(\frac{\sum_{i\in w, j\in w(i)} \PP(e_{ij}=1)}{\sum_{i \in w} |w(i)|} > \tau\Big), \\
w(i)=\{j\mid j\in w, \PP(e_{ij}=1)<\PP(e_{ij_{i,\text{0.5}}}=1)\}\notag
\#
If a cluster fails in the density test, we select a sample pair as $(j_0,j_{j_0,\mathrm{0.7}})$, and require oracles to judge $\mathcal{P}(w)=\mathbf{1}(y_{j_0}=y_{j_{j_0,\mathrm{0.7}}})$. It estimates whether the purity of this cluster is higher than 0.7 (i.e., satisfying the conditions in Theorem~\ref{lemma:purity}). Overall, the purity test is as follows:
\#\label{purity_test}
\mathcal{PT}(w)= \mathcal{DT}(w) \quad \text{if} \quad \mathcal{DT}(w)\quad \text{else}\quad \mathcal{P}(w).
\#

\textbf{Clustering Update.}
In response to different outcomes in the purity test, we adopt the following strategies to update the clustering: (i) The purity test yields a result of 1 for both clusters, and we require the oracle to query their central samples. Should the query yield a must-link result, we will merge the clusters; if not, we will retain them as separate clusters. (2) The purity test yields 0 for at least one cluster, indicating a need for refinement, we proceed to split it into multiple subclusters with enhanced purity. Given the proven effectiveness of sample-based active clustering in managing small-scale datasets, as noted by ~\citet{basu2004active}, we apply this approach for the splitting task, and the detail is described in Algorithm~\ref{algo:pf}.

\begin{figure*}[!t]\vspace*{-18pt}
\begin{minipage}{0.5\textwidth}
\vspace*{0pt}
\begin{algorithm}[H]\small
\small
\caption{Adaptive Active Aggregation and Splitting}  
\label{algo:apc}
     \textbf{Input:} Data $X$, query limit $Q_{\max}$, index $q=0$, clustering algorithm~$\mathcal{A}$ \\
     \textbf{Initialization:} Using $\mathcal{A}$ to generate initial clustering $\Omega$ \\ 
    \For{iter in $1:L_2$}{
         Get a batch of $k$ candidate cluster pairs whose aggregation probability ranks top-$k$ as $\mathcal{C}=\{(w_i,w_j)| w_i,w_j \in \Omega\}$ \\
         Use Eq.~\eqref{delta_nmi} to measure cluster pairs in $\mathcal{C}$ and choose the top pair\\
        \eIf{$q< Q_{\max}$}{
            Implement Purity Test on $w_1$ and $w_2$ with Eq. \eqref{purity_test} \\ \label{oral_1}
            \eIf{Both clusters pass the test}{
                Select their central samples as $x_1$ and $x_2$\\
                 Require oracle to query $(x_1,x_2)$ \label{oral_2} \\
                Aggregate $w_1$ and $w_2$ if $(x_1,x_2)$ is must-linked 
            }{
                 Split $w_1$ or/and $w_2$ with Algorithm~\ref{algo:pf} \label{oral_3} 
            }
            Update constraints set with Algorithm~\ref{algo:fti}
            
            Update $q$ by adding the newly invested number of queries 
        }{
        Terminate and return the result
        }
    }
\end{algorithm}
\end{minipage}\hspace*{0.01\textwidth}
\begin{minipage}{0.5\textwidth}\vspace*{0pt}
\begin{algorithm}[H]\small
\caption{Subcluster Partition}  
\label{algo:pf}
     \textbf{Input:} Cluster $w$; subcluster lists $\mathcal{N}=\{\}$ \\
     Sort samples in $w$ in ascending order by their distance to the centroid\\
    \For{i in $w$}{
         Select one sample from each subcluster in $\mathcal{N}$, and get $S$\\
         Query $i$ with sample in $S$ till a must-link is reached or all samples in $S$ have been queried\\
        \eIf{$i$ is must-linked to $j$ in $S$}{
            Move $i$ from $w$ to the subcluster $\mathcal{N}_u$, $j\in\mathcal{N}_u$ 
        }{
            Move $i$ from $w$ to an empty subcluster, and add the new subcluster $\mathcal{N}_i=\{i\}$ to $\mathcal{N}$ 
        }   
        
    }
\end{algorithm}
\vspace{-8mm}
\begin{algorithm}[H]\small
	\caption{Fast Transitive Inference}  
	\label{algo:fti}
		\textbf{Input:} State matrix $S$, new constraints $(s,t)$. \\
		\For{$i$ in $(s,t)$}{
            Get $\mathcal{ML}=\{j|S[i,j]=1\}$ \\
            Get $\mathcal{CL}=\{j|S[i,j]=-1\}$ \\
            Let $S[p,q]=1$, for $p,q\in \mathcal{ML}$ \\
            Let $S[p,q]=-1$, for $p\in\mathcal{ML},q\in\mathcal{CL}$ 
            }
\end{algorithm}
\end{minipage}
\vspace{-5pt}
\end{figure*}

\textbf{Transitive Inference.} The must-link and cannot-link constraints possess the transitivity property (e.g., $(x_1,x_2),(x_2,x_3)$ are must-linked, then $(x_1,x_3)$ is must-linked). To store the constraints, we define a state matrix as $S=\{s_{ij}\}_{N\times N}, s_{ij}\in\{-1,0,1\}$. Here, 1/-1 denotes must-link/cannot-link, and 0 indicates an unqueried state. To avoid unnecessary queries, we need to augment the constraints set each time a new constraint is added. We assert that this expansion is only relevant to the preceding constraints that share a common sample with the new constraint, and propose a Fast Transitive Inference (Algorithm~\ref{algo:fti}) method to update the constraints. The correctness of this assertion and algorithm is proved in Appendix~\ref{appendix:exp:fti}. 


\subsubsection{Discussion.}
\label{methd:discu}
\textbf{Complexity Analysis.}
The computational complexity of the \Name{} algorithm comprises three distinct components. During the pairwise probability estimation, the complexity is $O(NM)$, where $M$ denotes the number of neighbors considered for each sample. This is due to the need to only account for the neighboring clusters in each query. In the adaptive clustering process, the complexity depends on the specific algorithm, which we mark as $O(A)$. The complexity of the query strategy is less than $O(k^2+kL_2)$, where $k$ is the initial cluster number and $L_2$ denotes the iteration number for \Name{}. Because there are $k^2$ cluster pairs at the beginning, and we only need to re-calculate $k$ cluster pairs in each iteration. Consequently, the computational complexity of \Name{} is at most $O(NM+A+k^2+kL_2)$.

\textbf{Application of \Name{}.}
\Name{} excels in two key scenarios: First, in clustering tasks lacking prior data information (strange environment), like the number of classes or sample distribution, where traditional methods falter or require extensive human queries. Second, in ongoing real-world applications needing regular data aggregation (expanding environment), such as weekly updates. Here, \Name{} adeptly merges new with existing clusters, efficiently managing redundancy. It's particularly suitable when each period's data is already high-quality, naturally meeting purity constraints.

\vspace{-0.1cm}
\section{Experiments}
\label{sec:experiment}
\vspace{-0.1cm}
 
We organize the experiments as follows: we explain the experimental setup in Section~\ref{exp:setting}; we compare \Name{} with state-of-the-art active clustering methods and present the detailed results in Section~\ref{exp:rq1}; then we compare the performance of \Name{} when applied to different clustering algorithms in Section~\ref{exp:rq2}; lastly, we explore the influence of components in \Name{} in Section~\ref{exp:rq3} to ~\ref{exp:rq5}.

\subsection{Experimental Setup}
\label{exp:setting}

\textbf{Datasets.} We sampled six datasets from four real-world image sources for the experiments: \texttt{Market-1501}~\citep{zheng2015scalable}, which comprises human body images from 1501 individuals. We use two subsets: \texttt{MK20} (351 images from 20 people) and \texttt{MK100} (1650 images from 100 people); \texttt{Humbi}~\citep{yu2020humbi}, a large multiview image dataset focused on human expressions like faces, and we extracted a subset \texttt{Humbi-Face} containing 5600 face images from 100 different people; \texttt{Handwritten}~\citep{dua2017uci}, a collection containing 2000 samples of handwritten digits from `0' to `9'. We use the Fourier coefficient features in the experiments. (4) \texttt{MS1M}~\citep{guo2016ms}, a substantial benchmark dataset commonly used in face recognition tasks, and we sampled two large subsets: \texttt{MS1M-10k}, \texttt{MS1M-100k}. The details of these datasets are shown in Table~\ref{tab:data_detail}. Regarding the commonly used benchmarks in previous constrained clustering methods (e.g., UCI datasets \citep{asuncion2007uci}), they are not appropriate for our problems due to the very small sample and class sizes. Consequently, we have not evaluated the performance on those benchmarks. We seek to demonstrate that our method is generally workable for different types of data/applications with a wide range of cluster numbers and sample numbers. 

\begin{figure*}[t]
	\centering
        \subfigure{
        \begin{minipage}[t]{0.24\linewidth}
            \centering
            \includegraphics[width=1\textwidth]{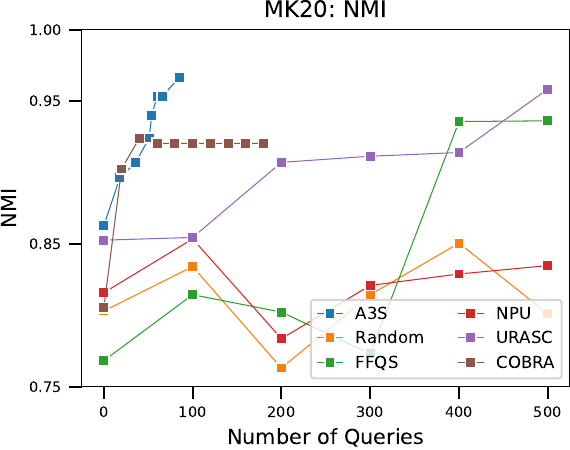}\\
        \end{minipage}%
        }
        \subfigure{
        \begin{minipage}[t]{0.24\linewidth}
            \centering
            \includegraphics[width=1\textwidth]{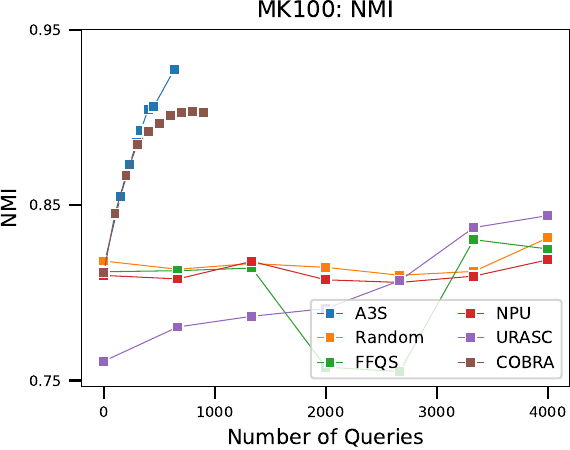}\\
        \end{minipage}%
        }
        \subfigure{
        \begin{minipage}[t]{0.24\linewidth}
            \centering
            \includegraphics[width=1\textwidth]{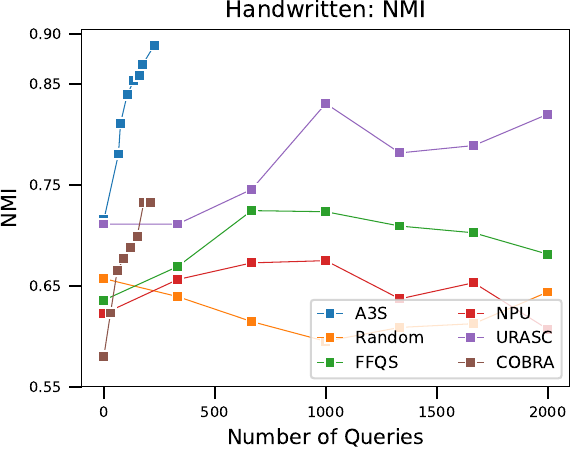}\\
        \end{minipage}%
        }
        \subfigure{
        \begin{minipage}[t]{0.24\linewidth}
            \centering
            \includegraphics[width=1\textwidth]{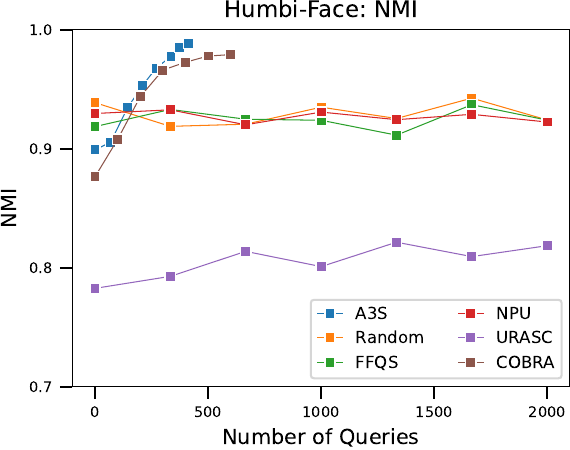}\\
        \end{minipage}%
        }
        \subfigure{
        \begin{minipage}[t]{0.24\linewidth}
            \centering
            \includegraphics[width=1\textwidth]{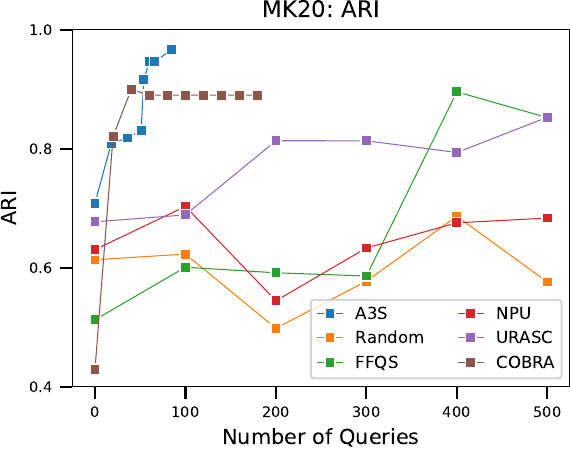}\\
        \end{minipage}%
        }
        \subfigure{
        \begin{minipage}[t]{0.24\linewidth}
            \centering
            \includegraphics[width=1\textwidth]{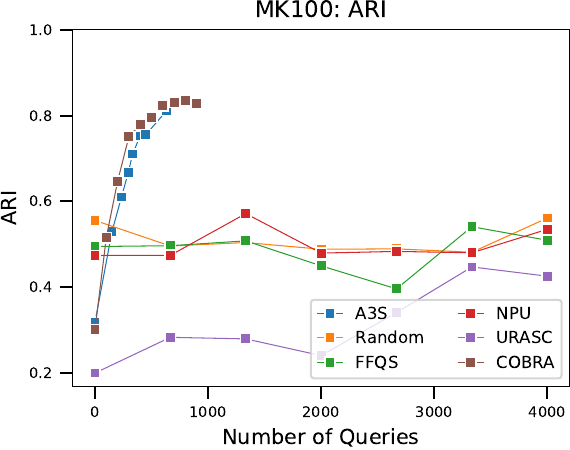}\\
        \end{minipage}%
        }
        \subfigure{
        \begin{minipage}[t]{0.24\linewidth}
            \centering
            \includegraphics[width=1\textwidth]{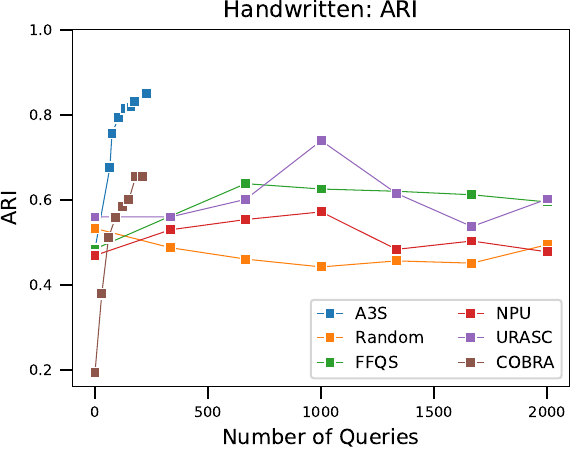}\\
        \end{minipage}%
        }
        \subfigure{
        \begin{minipage}[t]{0.24\linewidth}
            \centering
            \includegraphics[width=1\textwidth]{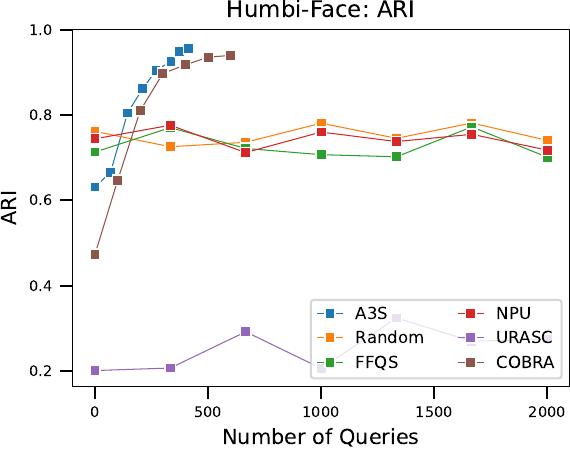}\\
        \end{minipage}%
        }
        \vspace{-10pt}
	\caption{Comparing the performance of \Name{} and baselines on four datasets in terms of query count. More queries are invested for baselines to illustrate their characteristics.
	}
	\label{fig:main curve}
	\vspace{-10pt}
\end{figure*}

\textbf{Baselines.} 
To validate the performance of our methods, a set of baselines and state-of-the-art algorithms are compared. \textbf{Random}~\citep{basu2003comparing} randomly selects pairwise constraints. \textbf{FFQS}~\citep{basu2004active} uses the farthest-first scheme to acquire diversified samples and pairwise constraints. \textbf{NPU}~\citep{xiong2013active} uses the classic entropy-based principle to select informative samples to construct pairwise constraints. We use PCKMeans~\citep{basu2003comparing} as the semi-supervised clustering algorithm for these three methods, because PCKMeans best suits the pairwise constraints manner, and is suitable for large data sets with sparse high-dimensional data~\citep{cai2023review}. \textbf{URASC}~\citep{xiong2016active} aims to iteratively query pairwise constraints that can maximally reduce the uncertainty of spectral clustering. 
\textbf{COBRA} over-cluster a dataset with K-means, then iteratively selects the closest cluster pairs for querying.

\begin{table}[t]
    \centering
    \setlength{\abovecaptionskip}{0.1cm}
    \setlength{\belowcaptionskip}{0cm}
    \caption{Statistical information about six datasets. $N$, $K$, and $D$ represent the sample numbers, class numbers, and the dimension of features. $b$ indicates whether the sample quantities between different classes are balanced.}
    \resizebox{0.46\textwidth}{!}{
    \begin{tabular}{c|cccccccc}
        \toprule
         & \texttt{MK20} & \texttt{MK100} & \texttt{Handwritten} & \texttt{Humbi-Face} & \texttt{MS1M-10k} & \texttt{MS1M-100k} \\
        \midrule
        $N$ & 351 & 1650 & 2000 & 5600 & 10000 & 100000 \\
        $K$ & 20 & 100 & 10 & 100 &  146 & 1469  \\
        $D$ & 256 & 256 & 76 & 256 & 512 & 512 \\
        $b$ & \XSolidBrush & \XSolidBrush & \Checkmark &\Checkmark  & \XSolidBrush& \XSolidBrush\\
        \bottomrule
    \end{tabular}}
    \label{tab:data_detail}
    \vspace{-10pt}
\end{table}

\textbf{Implementation.} For the baseline methods, we maintain the same hyperparameter settings as reported in their original papers to ensure fairness in the comparison. Note that these baseline methods require the real cluster number as input, which is provided in our experiments. In many clustering applications, however, this number is typically not known beforehand, thus they are at an advantage. For the initialization of \Name{}\footnote{The code is available at https://github.com/xiangtanshi/A3S.}, we use isotonic regression to learn pairwise probability and utilize Fast Probabilistic Clustering (FPC)~\cite{liu2022mpc} as the adaptive clustering method. Here, we choose FPC because it only requires the pairwise probability for automatic data grouping, which is easy to implement. 
\textbf{COBRA} cannot assign a proper initial cluster number itself, so we use the same $k$ as \Name{} for a fair comparison. More details are presented in Appendix~\ref{appendix:imple}.


\textbf{Evaluation.} As discussed in \citet{vinh2009information}, NMI can exhibit bias towards fine-grained clustering. Therefore, in addition to NMI, we employ the Adjusted Rand Index (ARI)~\citep{hubert1985comparing} to evaluate the performance. NMI and ARI fall within the range of (0,1] and [-1,1] respectively, with larger values indicating superior clustering performance. To further investigate whether \Name{} effectively resolves the category fission and recover the real clustering structure, we introduce two supplementary metrics: (1) the Fission Rate ($\Upsilon=\frac{k}{K}$), where $K$ is the real class number and $k$ is the resulting cluster number; and (2) the entropy ratio between resulting clusters ($\Omega$) and real class partitions ($C$), $r=\frac{\HH(\Omega)}{\HH(C)}$. When $\Upsilon$ and $r$ approaches 1, we conclude that \Name{} has effectively mitigated the category fission problem, and successfully recovered the true structure of $C$.

\subsection{Comparison with SOTA Active Clustering Methods}
\label{exp:rq1}

We compare \Name{} and five baseline methods on four datasets with different numbers of queries in terms of both NMI and ARI.  
The results are shown in Figure~\ref{fig:main curve}. Overall, \Name{} has higher NMI and ARI values than other methods on these data sets when setting the same number of queries, and \Name{} requires only a small amount of queries to improve the NMI and ARI values significantly. In addition, we observe that both \Name{} and \textbf{COBRA} (cluster-based methods) improve steadily with the increase of queries, while \textbf{Random}, \textbf{FFQS}, \textbf{NPU} and \textbf{URASC} (semi-supervised clustering based methods) show fluctuations on all data sets. This is because genuine supervisory information can sometimes be detrimental to clustering, as it may introduce violations (e.g., $(x_i, x_j)$ is cannot-linked, but their similarity to $x_k$ is both very high). This problem also exists for the latest semi-supervised clustering methods such as PCSKM~\citep{vouros2021semi}. However, it does not mean that this line of work is not applicable. One common advantage of them is that they can ultimately improve the NMI and ARI value to 1.0 if enough queries can be provided (typically less than $N\times\log(N)$). They are a good choice when the target is to reveal the cluster identity for all samples accurately, or only low-quality features are available and the NMI of the initial clustering is lower than 0.2.

The detailed running result of \Name{} is in Table~\ref{tab:main_result}. \Name{} significantly reduces the fission rate and the entropy ratio to almost 1.0 on all datasets, with a high clustering purity. This validates that \Name{} can effectively reveal the true clustering structure of data. It's important to highlight that these results are obtained without any prior knowledge about the number of classes or class distribution. 
Additionally, \Name{} exhibits robustness to the choice of adaptive cluster number. As detailed in Appendix~\ref{appendix:exp_result}, increasing the adaptive number does not significantly alter the number of queries needed to obtain the desired result.

Next, we delineate the distinctions in the results yielded by \Name{} and \textbf{COBRA}. Although \textbf{COBRA} quickly improves the NMI and ARI values, it cannot further enhance the clustering even when more queries are invested (all must-link clusters have already been discovered), leading to a performance ceiling dictated by cluster purities. Recent methods like COBRAS \citep{van2018cobras} and AQM+MEE \citep{deng2023query} also encounter similar issues. In contrast, \Name{} not only delivers high-quality clusters but also identifies a subset of outlier samples. This approach enables continued augmentation of the NMI and ARI values through strategic querying of these outliers in combination with existing clusters. For instance, \Name{} and \textbf{COBRA} reach NMI values of 0.93 and 0.90 on \texttt{MK100} separately, but the overall clustering purity of \Name{} is 0.9752, far surpassing \textbf{COBRA} that is 0.8509. Further, by querying the outlier samples of \Name{} with their neighbor clusters until `must-link' is observed and they are aggregated to the corresponding clusters, \Name{} can reach an NMI of 0.99 with less than 500 more queries.

\begin{table}[t] 
	\centering
	\caption{Detailed results of \Name{} in Figure~\ref{fig:main curve}: the total running time (seconds), initial and ending fission rate ($\Upsilon$) and entropy ratio ($r$), and the final clustering purity.}
	\resizebox{\columnwidth}{!}{
		\begin{tabular}{c|c|cc|ccc}
			\hline 
		 & Time & $\Upsilon_{\mathrm{init}}$ &  $\Upsilon_{\mathrm{end}}$ & $r_{\mathrm{init}}$ & $r_{\mathrm{end}}$ & purity\\ 
			\hline
			\texttt{MK20} & 0.93 & 2.05& 0.95 & 1.25 & 1.01 & 0.9886 \\
            \texttt{MK100}& 5.58& 2.83 & 0.94 & 1.30 & 1.08 & 0.9752  \\
            \texttt{Handwritten} & 6.39 & 8.6& 1.00 & 1.55 & 1.00 & 0.9165 \\
            \texttt{Humbi-Face} & 37.62 & 3.48 & 1.01 & 1.19& 1.0 & 0.9745 \\
            \hline
	\end{tabular}}
	\label{tab:main_result}
	\vspace{-10pt}
\end{table}

\subsection{\Name{} for Different Clustering Algorithms}
\label{exp:rq2}

\begin{figure}[t]
	\centering
        \subfigure{
        \begin{minipage}[t]{0.48\linewidth}
            \centering
            \includegraphics[width=1\textwidth]{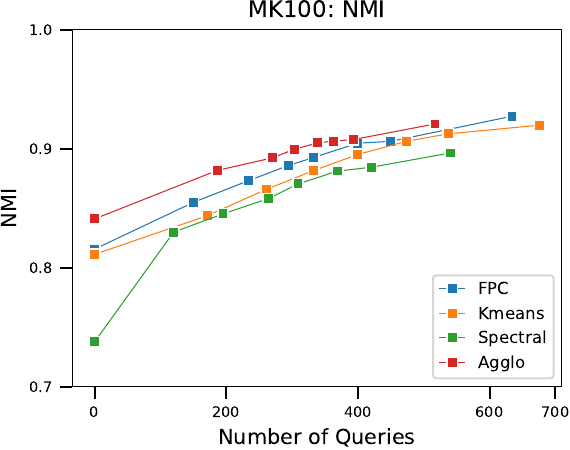}\\
        \end{minipage}%
        }
        \subfigure{
        \begin{minipage}[t]{0.48\linewidth}
            \centering
            \includegraphics[width=1\textwidth]{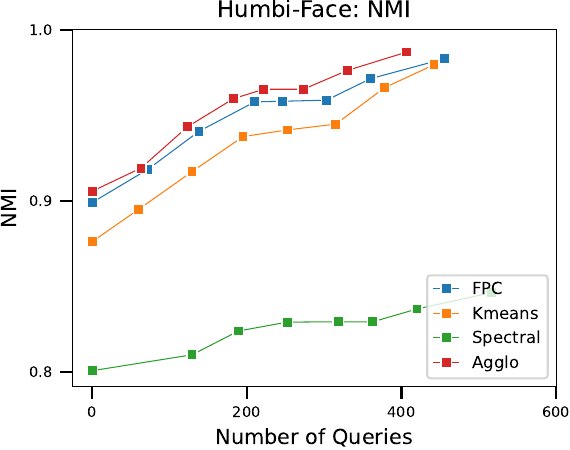}\\
        \end{minipage}%
        }
        \vspace{-20pt}
	\caption{Performance of \Name{} on \texttt{MK100} and \texttt{Humbi-Face} when utilizing different clustering algorithms to generate the initial clustering result.
	}
	\label{fig:abl_clus}
	\vspace{-10pt}
\end{figure}

\begin{table}[t] 
	\centering
	\caption{The ARI value between the clustering results of different clustering algorithms. We use F, K, S, and A to represent Fast Probabilistic Clustering, K-means Clustering, Spectral Clustering, and Agglomerative Clustering.}
	\resizebox{\columnwidth}{!}{
		\begin{tabular}{c|>{\columncolor{LightGreen}}c>{\columncolor{LightGreen}}c>{\columncolor{LightGreen}}c>{\columncolor{LightGreen}}c|>{\columncolor{LightBlue}}c>{\columncolor{LightBlue}}c>{\columncolor{LightBlue}}c>{\columncolor{LightBlue}}c}
			\hline 
            & \multicolumn{4}{c|}{\texttt{MK100}} & \multicolumn{4}{c}{\texttt{Humbi-Face}} \\
            
		  & \cellcolor{white} F & \cellcolor{white} K &\cellcolor{white} S &\cellcolor{white} A &\cellcolor{white}  F &\cellcolor{white} K &\cellcolor{white} S &\cellcolor{white} A \\
			\hline
		F & 1.000 & 0.581 & 0.187 &0.597 & 1.000 &         0.600 & 0.272 & 0.805 \\
            K &  0.581 & 1.000 & 0.140 & 0.408 &  0.600 & 1.000 & 0.178 &  0.573 \\
            S & 0.187 & 0.140 & 1.000 & 0.249 & 0.272 & 0.178 & 1.000 & 0.276 \\
            A & 0.597 & 0.408 & 0.249 & 1.000 & 0.805 & 0.573 & 0.276 & 1.000  \\
            \hline
	\end{tabular}}
	\label{tab:clus_corre}
	\vspace{-10pt}
\end{table}

\begin{figure}[t]
	\centering
        \subfigure{
        \begin{minipage}[t]{0.48\linewidth}
            \centering
            \includegraphics[width=1\textwidth]{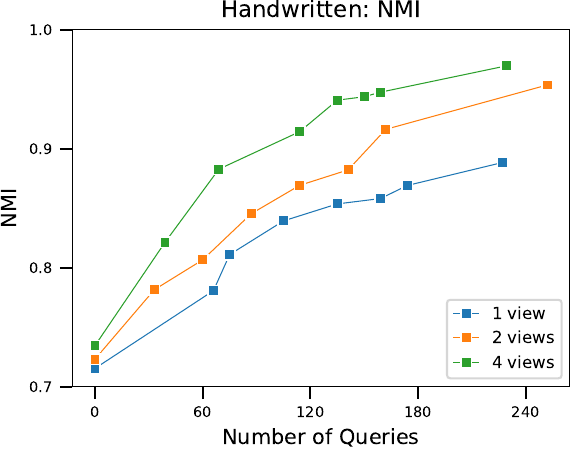}\\
        \end{minipage}%
        }
        \subfigure{
        \begin{minipage}[t]{0.48\linewidth}
            \centering
            \includegraphics[width=1\textwidth]{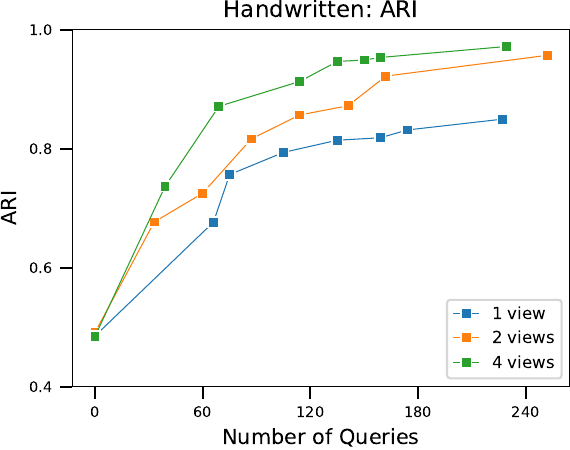}\\
        \end{minipage}%
        }
        \vspace{-15pt}
	\caption{Performance of \Name{} on \texttt{Handwritten} when 1, 2, or 4 views of the feature are used.
	}
	\label{fig:abl_view}
	\vspace{-5pt}
\end{figure}

To assess the compatibility of \Name{} on different clustering algorithms, we additionally use three classic clustering algorithms to generate the initial clustering (adaptive cluster number is provided by FPC): K-means clustering~\cite{choo2020k}, Spectral clustering~\citep{von2007tutorial} and Agglomerative Clustering~\cite{murtagh2014ward}. We test these versions of \Name{} on \texttt{MK100} and \texttt{Humbi-Face}, and the results are shown in Figure~\ref{fig:abl_clus}. Besides, we quantify the difference between these initial clustering results with their mutual ARI value in Table~\ref{tab:clus_corre}. We have two observations: first, the initial clustering outcomes derived from different algorithms exhibit substantial variability (mutual ARI value is typically lower than 0.6); second, \Name{} demonstrates a consistent ability to enhance the clustering performance across various algorithms. The results validate that \Name{} is robust to the initial clustering results, and can be effectively applied to other clustering algorithms without modification. In contrast, previous active clustering methods are typically designed and applicable to a specific clustering algorithm like DBSCAN~\cite{Mai_He_Hubig_Plant_Bohm_2013} and Spectral Clustering~\cite{xiong2016active}.

\subsection{Influence of Estimated Pairwise Probability Quality}\label{exp:rq3}
Better pairwise probability can lead to improved clustering performance, but its impact on \Name{} is unexplored. Multi-view clustering~\citep{yang2018multi} is the major approach in this domain, and we utilize \texttt{Handwritten} to investigate this aspect. The complete \texttt{Handwritten} dataset comprises four distinct feature types for each sample. Following the setup in \citet{liu2022mpc}, we initially learn the pairwise probabilities using features from each view. Subsequently, pairwise probabilities are aggregated across $V$ different views employing the following formula~\citep{liu2022mpc}: $\PP(e_{ij}=1|d_{ij}^1,\cdots,d_{ij}^V)=\frac{\prod_{m=1}^{V} \PP(e_{ij}=1|d_{ij}^m)}{\prod_{m=1}^{V} \PP(e_{ij}=1|d_{ij}^m)+\prod_{m=1}^{V} \PP(e_{ij}=0|d_{ij}^m)}$, where $d_{ij}^m$ is the pairwise distance in the $m$-th view. 

In the investigation of \Name{}'s performance with varying views (one/two/four) as illustrated in Figure~\ref{fig:abl_view}, we observed that employing multi-view clustering slightly improves the initial clustering performance, but significantly boosts \Name{}'s overall effectiveness. This approach leads to quicker convergence and nearly perfect NMI and ARI values. The improvement is largely attributed to multi-view aggregated pairwise probabilities, which make outlier samples distinguishable from neighbor samples that are from different classes, thereby preventing them from being in one cluster during initial clustering. Empirically, In four-view scenarios, despite a higher initial fission rate of 10.1, the initial clustering purity improves from 0.916 to 0.989.

\begin{figure}[t]
	\centering
        \subfigure{
        \begin{minipage}[t]{0.46\linewidth}
            \centering
            \includegraphics[width=1\textwidth]{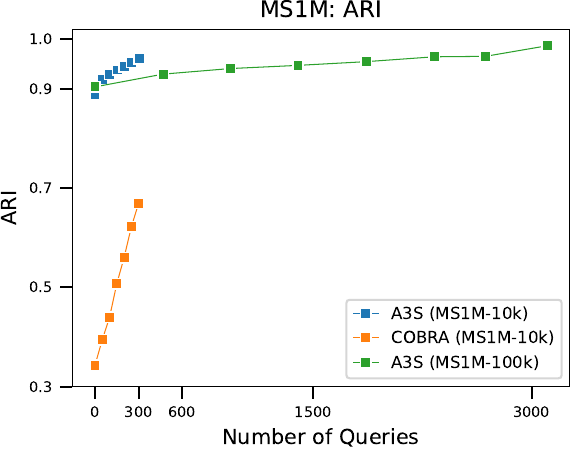}\\
        \end{minipage}%
        }
        \subfigure{
        \begin{minipage}[t]{0.50\linewidth}
            \centering
            \includegraphics[width=1\textwidth]{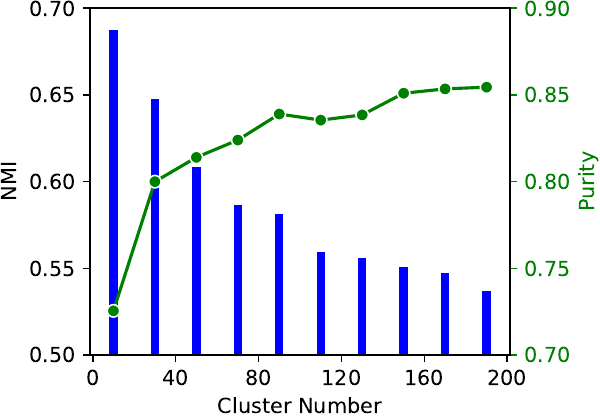}\\
        \end{minipage}%
        }
        \vspace{-15pt}
	\caption{Left: performance of \Name{} on \texttt{MS1M-10k} and \texttt{MS1M-100k}. Right: trade-off between initial clustering quality and initial clustering purity for \textbf{COBRA} on \texttt{Handwritten}.
	}
	\label{fig:abl_ms1m}
	\vspace{-10pt}
\end{figure}

\subsection{Performance of \Name{} on Large datasets.}\label{exp:rq5}
We further test \Name{} on two large datasets \texttt{MS1M-10k} and \texttt{MS1M-100k}, where the initial cluster number is determined by FPC for both \Name{} and \textbf{COBRA}. The result is in Figure~\ref{fig:abl_ms1m}. We have two observations: (1) \Name{} can reach near-optimal ARI value (0.995) with only 3100 queries for \texttt{MS1M-100k} in 8581.86 seconds, which validates that it is scalable to large datasets. As far as we know, this is the largest dataset ever used in active clustering research. (2) \textbf{COBRA} is sensitive to the initial cluster number and can waste too many queries to get a bad clustering result. 

\subsection{Case Study for \Name{}}\label{exp:rq4}
We present a case study in Figure~\ref{fig:case} to demonstrate \Name{}'s mechanism using a two-class dataset. Initially, data points are clustered into seven groups, including two outlier clusters. In the first iteration, clusters $w_6$ and $w_7$ are selected for their largest expected impact on the NMI value. During the purity test, $w_6$ fails the density test, leading to a subsequent query between its central sample ($j_0$) and the margin sample ($j_{j_0,0.7}$) of the sphere that is centered at $j_0$ and contains 70\% samples of $w_6$. The cannot-link result indicates $w_6$ fails the purity test, so it is split into two sub-clusters using Algorithm~\ref{algo:pf}, which costs 12 queries. In the second iteration, clusters $w_9$ and $w_7$ are selected, both passing the purity test. Their central samples are queried, and they are merged into $w_{10}$ following the must-link result. 

\looseness=-1 \Name{} addresses outlier detection through purity tests and subcluster partitioning. However, the purity test can discover most but not all low-quality clusters whose purity is lower than 0.7. As a remedy, we could resort to multi-view features which can significantly improve the clustering purity and reduce the number of outliers, as shown in Section~\ref{exp:rq3}. We remark that when multi-view data is available, the purity test and subcluster partition of \Name{} can be omitted, streamlining the procedure.
In contrast, \textbf{COBRA} attempts to mitigate outlier issues by opting for a larger cluster number to enhance initial clustering purity, which is inefficient as depicted in the right image of Figure~\ref{fig:abl_ms1m} (the clustering purity increases very slowly).


\section{Related Work}
\label{sec:rel}

\textbf{Query Strategy in Active Clustering.} Recent Active clustering methods have embraced a trend of incorporating sample uncertainty into their query strategies. These methods frequently utilize entropy to quantify uncertainty~\citep{abin2016clustering,xiong2016active,shi2020fast}. A common task involves estimating the probability of a sample belonging to different clusters or neighborhoods~\cite{xiong2013active}. Additionally, alternative criteria such as maximum expected error reduction~\citep{wang2010active} and maximum expected clustering change~\citep{biswas2014active} have been proposed to assess the stability of clustering results when perturbing the similarity values between two samples.

\textbf{Constraints in Semi-supervised Clustering.}    
When using the must-link and cannot-link constraints to perform SSC, two aspects are usually taken into consideration: the transitive inference of constraints and the combination of constraints to specific clustering algorithms. A few studies~\citep {lutz2021active} address transitive inference with graph-based techniques instead of a brute-force manner. In addition, recent studies that optimize clustering results with constraints in SSC have explored various approaches, ~\citet{vouros2021semi} explores Kmeans clustering for high dimensional data; ~\citet{yang2022semi} attempts to correctly infer the number of clusters for hierarchical clustering; \citet{ren2018semi} tries to utilize prior knowledge to determine a proper cluster number for density-based clustering; ~\citet{chen2022adaptive} develops a graph-based SSC that is robust to noise, but is not suitable for large datasets. However, there remains a relatively unexplored potential in integrating these modern SSC approaches with active clustering, which presents a promising avenue for future research.

\section{Conclusion}
\label{sec:con}
This paper studies the cluster-adjustment scheme in active clustering, offering theoretical guidance for non-deteriorating cluster aggregation and quantifying the impact of human queries and aggregation operations. We then propose \Name{}, a general framework that does not rely on the dataset prior. Through extensive testing, \Name{} demonstrates its effectiveness on diverse real-world datasets with varying class numbers and distributions. We will explore the application of \Name{} on more complex multi-view datasets and gigantic datasets at the million level in the future.

\section*{Acknowledgements}
This work is supported by the National Key Research and Development Program of China (2022YFB3104701) and Alibaba Group through Alibaba Research Intern Program. We appreciate the reviewers for their insightful feedback and advice, these constructive criticism and recommendations have been invaluable in helping us improve the quality of this work.

\section*{Impact Statement}

This paper presents a work that aims to advance the field of Active Clustering. There are some potential societal consequences 
of our work, none of which we feel must be specifically highlighted here.

\nocite{langley00}

\bibliography{example_paper}
\bibliographystyle{icml2024}

\newpage
\appendix
\onecolumn
\section{Proofs}
\label{sec:app-a}


\subsection{Proof of Theorem \ref{lemma:purity}} \label{appendix:pf:purity}
\begin{proof}[Proof of Theorem \ref{lemma:purity}]
    Let $p$ and $q$ denote the sizes of $w_1$ and $w_2$, respectively. We further assume that the class index of these outliers is $i_j \in \{1, 2, \cdots, K\}$, where $j\in \{1,2,\cdots,(1-t_1)p+(1-t_2)q\}$. For ease of presentation, for any class index $i \in \{1, 2, \cdots, K\}$, we use $s_i$ to denote the class size, i.e., $|c_i|=s_i$. Without loss generality, we assume that $q \ge p$ throughout this proof. 
    
    By the definition of mutual information, we have
    \begin{align}\label{eq:mi}
        \mathbb{I}(\Omega^{*}; C) &=\mathbb{I}(\Omega; C)+\sum_{\tau=1}^K \PP(w_{1,2}\cap c_\tau)\log\frac{\PP(w_{1,2}\cap c_\tau)}{\PP(w_{1,2})\PP(c_\tau)}-\sum_{\tau = 1}^K \PP(w_{1}\cap c_\tau)\log\frac{\PP(w_{1}\cap c_\tau)}{\PP(w_{1})\PP(c_\tau)} \notag \\
        & \qquad - \sum_{\tau = 1}^K \PP(w_{2}\cap c_\tau)\log\frac{\PP(w_{2}\cap c_\tau)}{\PP(w_{2})\PP(c_\tau)} \notag\\
        &=\mathbb{I}(\Omega; C) + \underbrace{\sum_{\tau = 1}^K \frac{|w_{1,2}\cap c_\tau|}{N}\log\frac{N\cdot|w_{1,2}\cap c_\tau|}{|w_{1,2}|\cdot|c_\tau|}}_{\displaystyle\mathrm{(I)}} - \underbrace{\sum_{\tau = 1}^K \frac{|w_{1}\cap c_\tau|}{N}\log\frac{N\cdot|w_{1}\cap c_\tau|}{|w_{1}|\cdot|c_\tau|} }_{\displaystyle\mathrm{(II)}} \notag\\
        & \qquad - \underbrace{\sum_{\tau = 1}^K \frac{|w_{2}\cap c_\tau|}{N}\log\frac{N\cdot|w_{2}\cap c_\tau|}{|w_{2}|\cdot|c_\tau|} }_{\displaystyle\mathrm{(III)}}.
    \end{align}

    Then we bound these three terms respectively. For Term (I), we have
    \# \label{eq:101}
    \mathrm{(I)} &= \frac{|w_{1,2}\cap c_1|}{N}\log\frac{N\cdot|w_{1,2}\cap c_1|}{|w_{1,2}|\cdot|c_1|} + \sum_{\tau = 2}^K \frac{|w_{1,2}\cap c_\tau|}{N}\log\frac{N\cdot|w_{1,2}\cap c_\tau|}{|w_{1,2}|\cdot|c_\tau|} \notag \\ 
    & = \frac{t_{1}p+t_{2}q}{N}\log\frac{N(t_{1}p+t_{2}q)}{(p+q) s_{1}} + \sum_{j=1}^{(1-t_{1})p+(1-t_{2})q}\frac{1}{N}\log\frac{N \cdot |w_{1,2} \cap c_{i_j}| }{(p+q) s_{i_{j}}} .
    \# 
    Similarly, we have
    \# \label{eq:102}
    \mathrm{(II)} & = \frac{|w_{1}\cap c_1|}{N}\log\frac{N\cdot|w_{1}\cap c_1|}{|w_{1}|\cdot|c_1|} + \sum_{\tau = 2}^K \frac{|w_{1}\cap c_\tau|}{N}\log\frac{N\cdot|w_{1}\cap c_\tau|}{|w_{1}|\cdot|c_\tau|} \notag \\ 
    & = \frac{t_{1}p}{N}\log\frac{N t_{1}}{ s_{1}} + \sum_{j=1}^{(1-t_{1})p}\frac{1}{N}\log\frac{N \cdot|w_{1}\cap c_{i_j}| }{p s_{i_{j}}},
    \# 
    and 
    \# \label{eq:103}
    \mathrm{(III)} & = \frac{|w_{2}\cap c_1|}{N}\log\frac{N\cdot|w_{2}\cap c_1|}{|w_{2}|\cdot|c_1|} + \sum_{\tau = 2}^K \frac{|w_{2}\cap c_\tau|}{N}\log\frac{N\cdot|w_{2}\cap c_\tau|}{|w_{2}|\cdot|c_\tau|} \notag \\ 
    & = \frac{t_{2}q}{N}\log\frac{N t_{2}}{ s_{2}} + \sum_{j=1}^{(1-t_{2})q}\frac{1}{N}\log\frac{N \cdot|w_{2}\cap c_{i_j}| }{q s_{i_{j}}} .
    \# 
    Plugging Eq.~\eqref{eq:101}, Eq.~\eqref{eq:102}, and Eq.~\eqref{eq:103} into Eq.~\eqref{eq:mi}, together with the fact that 
    \$
    |w_{1,2} \cap c_{i_j}| & \ge \max\{|w_{1} \cap c_{i_j}|, |w_{2} \cap c_{i_j}| \}, \quad \forall j \in \{1, 2, \cdots, (1- t_1)p + (1- t_2)q\},
    \$
    we obtain that
    \# 
    \mathbb{I}(\Omega^*; C) &\ge \mathbb{I}(\Omega; C) + \underbrace{\frac{t_{1}p}{N}\log\frac{t_{1}p+t_{2}q}{t_{1}(p+q)} + \frac{t_{2}q}{N}\log\frac{t_{1}p+t_{2}q}{t_{2}(p+q)}}_{\displaystyle\mathrm{(IV)}} \label{eq:104} \\
    & \qquad - \underbrace{\Big(\frac{(1-t_{1})p+(1-t_{2})q}{N}\log (p+q)-\frac{(1-t_{1})p}{N}\log p - \frac{(1-t_{2})q}{N} \log q\Big)}_{\displaystyle\mathrm{(V)}}. \label{eq:105}
    \#

    For Term (IV) in Eq.~\eqref{eq:104}, by the Taylor expansion 
    \$
    \log(1+x) = \sum_{u = 1}^{\infty} \frac{(-1)^{u-1}}{u} \cdot x^u,
    \$ 
    we have
    \# \label{eq:111}
    \mathrm{(IV)} & = \frac{t_1p}{N}\sum_{u = 1}^{\infty}  \frac{(-1)^{u-1}}{u} \cdot \Big( \frac{(t_2 - t_1) q}{ t_1 (p+q)} \Big)^u + \frac{t_2q}{N}\sum_{u = 1}^{\infty}  \frac{(-1)^{u-1}}{u} \cdot \Big[ \frac{(t_1 - t_2) p}{ t_2 (p+q)} \Big]^u \notag \\ 
    & = \sum_{v = 1}^{\infty} \frac{1}{2v - 1} \cdot 
     \Big[ \frac{t_1p}{N} \cdot \Big( \frac{(t_2 - t_1) q}{ t_1 (p+q)} \Big)^{2v-1} + \frac{t_2q}{N}\Big( \frac{(t_1 - t_2) p}{ t_2 (p+q)} \Big)^{2v-1} \Big] \notag \\ 
    & \qquad - \sum_{v=1}^{\infty}  \frac{1}{2v} \cdot 
     \Big[ \frac{t_1p}{N} \cdot \Big( \frac{(t_2 - t_1) q}{ t_1 (p+q)} \Big)^{2v} + \frac{t_2q}{N}\Big( \frac{(t_1 - t_2) p}{ t_2 (p+q)} \Big)^{2v} \Big] .
    \#  
    For ease of presentation, we denote $m = q/p \ge 1$. Then for any $v \ge 1$, we have
    \# \label{eq:112}
    & \sum_{v=1}^{\infty}  \frac{1}{2v} \cdot 
     \Big[ \frac{t_1p}{N} \cdot \Big( \frac{(t_2 - t_1) q}{ t_1 (p+q)} \Big)^{2v} + \frac{t_2q}{N}\Big( \frac{(t_1 - t_2) p}{ t_2 (p+q)} \Big)^{2v} \Big] \notag \\ 
     & \qquad = \sum_{v = 1}^{\infty} \frac{pq(t_1 - t_2)^{2v}}{2vN(p+q)^{2v}} \cdot \Big[ \frac{q^{2v-1}}{t_1^{2v-1}} + \frac{p^{2v-1}}{t_2^{2v-1}}\Big] \notag \\
     & \qquad \le \sum_{v=1}^{\infty} \frac{mp^2(3/10)^{2v}}{2vNp^{2v}(1+m)^{2v}} \cdot (1 + m^{2v-1}) \cdot \Big( \frac{10p}{7} \Big)^{2v-1} \notag \\
     & \qquad \le \frac{3p}{20N} \sum_{v = 1}^{\infty} \frac{1}{v} \cdot \Big( \frac{3}{7}\Big)^{2v-1} \le \frac{0.0716 p}{N},
    \#  
    where the first inequality uses $m = q/p$ and the assumption that $t_1, t_2 \in [0.7, 1]$, the second inequality follows the fact that $m (1 + m^{2v-1}) \le (1+ m)^{2v}$, and the last inequality follows that
    \$
    \sum_{v = 1}^{\infty} \frac{1}{v} \cdot \Big( \frac{3}{7}\Big)^{2v-1} \le \frac{3}{7} + \sum_{v = 2}^{\infty}  \frac{1}{2} \cdot \Big(\frac{3}{7}\Big)^{2v-1} = \frac{267}{560}
    \$ 
    and simple calculations. 

    On the other hand, for any $v \ge 1$, we have 
    \# \label{eq:113}
    & \sum_{v = 1}^{\infty} \frac{1}{2v - 1} \cdot 
     \Big[ \frac{t_1p}{N} \cdot \Big( \frac{(t_2 - t_1) q}{ t_1 (p+q)} \Big)^{2v-1} + \frac{t_2q}{N}\Big( \frac{(t_1 - t_2) p}{ t_2 (p+q)} \Big)^{2v-1} \Big] \notag \\
     & \qquad = \sum_{v = 1}^{\infty} \frac{pq(t_1 - t_2)^{2v-1}}{(2v-1) N (p+q)^{2v-1}} \cdot \Big[ \frac{p^{2v-2}}{t_2^{2v-2}} - \frac{q^{2v-2}}{t_1^{2v-2}}\Big] \notag \\ 
     & \qquad = \sum_{v = 1}^{\infty} \frac{pq(t_1 - t_2)^{2v+1}}{(2v+1) N (p+q)^{2v+1}} \cdot \Big[ \frac{p^{2v}}{t_2^{2v}} - \frac{q^{2v}}{t_1^{2v}}\Big] .
    \#  
    Furthermore, we have
    \# \label{eq:114}
    \sum_{v = 1}^{\infty} \frac{pq(t_1 - t_2)^{2v+1}}{(2v+1) (p+q)^{2v+1}} \cdot \Big[ \frac{p^{2v}}{t_2^{2v}} - \frac{q^{2v}}{t_1^{2v}}\Big] & \ge - \sum_{v = 1}^{\infty} \frac{pq |t_1 - t_2|^{2v+1}}{(2v+1) N (p+q)^{2v+1}} \cdot  \Big[ \frac{p^{2v}}{t_2^{2v}} + \frac{q^{2v}}{t_1^{2v}}\Big]  \notag \\ 
    & \ge - \frac{p}{N} \sum_{v = 1}^{\infty} \frac{1}{2v+1} \cdot \Big(\frac{3}{10}\Big)^{2v+1} \notag \\
    & \ge - \frac{0.0096 p}{N},
    \# 
    where the second inequality uses the facts that $(p+q)^{2v+1} \ge q^{2v+1} + q p^{2v}$ and $t_1, t_2 \in [0.7, 1]$, and the last inequality follows that
    \$
    \sum_{v = 1}^{\infty} \frac{1}{2v+1} \Big(\frac{3}{10}\Big)^{2v+1} \le \frac{9}{1000} + \frac{1}{5} \sum_{v = 2}^{\infty} \Big(\frac{3}{10}\Big)^{2v+1} = \frac{9}{1000} + \frac{243}{455000} < 0.0096.
    \$ 
    Combining Eq.~\eqref{eq:111}, Eq.~\eqref{eq:112}, Eq.~\eqref{eq:113}, and Eq.~\eqref{eq:114}, we obtain that 
    \# \label{eq:115}
    \mathrm{(IV)} \ge - \frac{0.0812 p}{N}.
    \#

    \paragraph{Term (V) in Eq.~\eqref{eq:105}.} For Term (V) in Eq.~\eqref{eq:105}, we have
    \# \label{eq:106}
    \mathrm{(V)} & = \frac{(1-t_{1})p+(1-t_{2})q}{N}\log (p+q)-\frac{(1-t_{1})p}{N}\log p - \frac{(1-t_{2})q}{N} \log q \notag \\ 
    &  = \frac{(1-t_1)p}{N} \log \frac{p+q}{p} + \frac{(1 - t_2)q}{N} \log \frac{p+q}{q}.
    \# 
    Furthermore, we have
    \# \label{eq:108}
    & \frac{(1-t_1)p}{N} \log \frac{p+q}{p} + \frac{(1 - t_2)q}{N} \log \frac{p+q}{q} \\
    & \qquad \le \frac{(1-\min\{t_1, t_2\})p}{N} \log \frac{p+q}{p} + \frac{(1 - \min\{t_1, t_2\})q}{N} \log \frac{p+q}{q} \notag \\ 
    & \qquad = \frac{(1-\min\{t_1, t_2\})}{N} \cdot  \Big[ p \log \frac{p+q}{p} + q \log \frac{p+q}{q}\Big] .
    \#  
    Let 
    \# \label{eq:109}
    \Delta h=\frac{1}{N} \cdot \Big[ p \log \frac{p+q}{p} + q \log \frac{p+q}{q} \Big] .
    \# 
    Combining Eq.~\eqref{eq:106}, Eq.~\eqref{eq:108}, and Eq.~\eqref{eq:109}, we obtain
    \# \label{eq:110}
    \mathrm{(V)} \le (1 - \min\{t_1, t_2\}) \cdot \Delta h.
    \# 

    \paragraph{Putting Together.} When $t_1,t_2\ge 0.7$,
    plugging Eq.~\eqref{eq:115} and Eq.~\eqref{eq:110} into Eq.~\eqref{eq:104} and Eq.~\eqref{eq:105}, we have
    \# \label{eq:116}
    \mathbb{I}(\Omega^*; C) \ge \mathbb{I}(\Omega; C) - \frac{0.0812p}{N} - (1 - \min\{t_1, t_2\}) \cdot \Delta h.
    \#  
    Recall that the $\Delta h$ defined in Eq.~\eqref{eq:109} takes the form
    \# \label{eq:117}
    \Delta h & = \frac{1}{N} \cdot \Big[ p \log \frac{p+q}{p} + q \log \frac{p+q}{q} \Big] \notag \\
    & = \frac{p}{N} \cdot \Big( \log (1+m) + m \log \big(1 + \frac{1}{m}\big) \Big) \notag \\
    & \ge 2\log 2 \cdot \frac{p}{N} , 
    \#
    where the second equality uses $m = q/p$, the last inequality uses $m \ge 1$.  
    Putting Eq.~\eqref{eq:116} and Eq.~\eqref{eq:117} together, we have
    \# \label{eq:118}
    \mathbb{I}(\Omega^*; C) \ge \mathbb{I}(\Omega; C)  - (1.0586 - \min\{t_1, t_2\}) \cdot \Delta h .
    \# 
    

    Then, we calculate the entropy after fusion.
    \begin{align}\label{eq:ent}
        \HH(\Omega^{*})&=\HH(\Omega)-\frac{p+q}{N}\log\frac{p+q}{N}+\frac{p}{N}\log\frac{p}{N}+\frac{q}{N}\log\frac{q}{N} \notag \\
        &=\HH(\Omega)-\frac{p+q}{N}\log (p+q)+\frac{p}{N}\log p+\frac{q}{N}\log q \notag \\
        &=\HH(\Omega) -\Delta h
    \end{align}
  
    Recall that 
    \$
    n_1 = \frac{2\mathbb{I}(\Omega; C)}{\HH(\Omega)+\HH(C)}, \quad n_2 = \frac{2\mathbb{I}(\Omega^{*}; C)}{\HH(\Omega^{*})+\HH(C)} .
    \$ 
    By Eq.~\eqref{eq:118} and Eq.~\eqref{eq:ent}, we know that the sufficient condition of $n_2 \ge n_1$ is
    \$
     \frac{2[\mathbb{I}(\Omega; C)-(1.0586-\min\{t_1, t_2\}) \cdot \Delta h]}{\HH(\Omega)+\HH(C)-\Delta h}\geq \frac{2\mathbb{I}(\Omega; C)}{\HH(\Omega)+\HH(C)},
    \$ 
    which is equivalent to 
    \$
    n_1 = \frac{2\mathbb{I}(\Omega; C)}{\HH(\Omega)+\HH(C)} \ge 2 \cdot (1.0586-\min\{t_1, t_2\}),
    \$
    which concludes the proof of Theorem~\ref{lemma:purity}.

    
\end{proof}

\subsection{Derivation of Eq.~\eqref{prob:ori}}
\label{appendix:pf:merge_prob}
\begin{proof}[Derivation of Eq.~\eqref{prob:ori}]
     We consider the queried result of ``must-link'' ($c_m=c_n$ ) between $w_i$ and $w_j$ as a conditional event, which is expressed by $\{\forall s\in w_i, \forall t\in w_j, e_{st}=1\mid\forall (s,t)\in w_i \text{ or } w_j, e_{st}=1\}$. Here the condition implies that samples within a cluster are assigned to the same class. Further, we follow the setup in probabilistic clustering~\cite{lu2004semi,liu2022mpc} that models the distribution of clustering based on the pairwise probability. Then, the joint probability density of a clustering $\pi= [z_1,z_2,\cdots,z_m$] for $m$ samples is expressed as $\PP(\pi)=\frac{1}{\alpha}\prod_{s,t\in[1,2,\cdots,m]}\PP(e_{st}=1)^{I(z_s=z_t)}\times \PP(e_{st}=0)^{I(z_s\neq z_t)}$, where $I(\cdot)$ is the indicator function, and $\alpha$ is the normalization factor.
    
     Under the condition of $\forall (s,t)\in w_i \text{ or } w_j, e_{st}=1$, the two events $\{\forall s\in w_i, \forall t\in w_j, e_{st}=1\}$ and $\{\forall s\in w_i, \forall t\in w_j, e_{st}=0\}$ are mutually exclusive. And we denote them as $y(w_i)=y(w_j)$ and $y(w_i)\neq y(w_j)$ for simplicity. Therefore, by the formula of conditional probability, we can obtain:
    \begin{align*} 
    \PP(c_m=c_n)
        &=\PP(\forall s\in w_i, \forall t\in w_j, e_{st}=1 \mid \forall (s,t)\in w_i \text{ or } w_j, e_{st}=1)\\
        &=\frac{\PP(y(w_i)=y(w_j), \forall (s,t)\in w_i \text{ or } w_j, e_{st}=1)}{\PP(\forall (s,t)\in w_i \text{ or } w_j, e_{st}=1))} \\ 
        &=\frac{\PP(y(w_i)=y(w_j), \forall (s,t)\in w_i \text{ or } w_j, e_{st}=1))}{\PP(y(w_i)=y(w_j), \forall (s,t)\in w_i \text{ or } w_j, e_{st}=1))+\PP(y(w_i)\neq y(w_j), \forall (s,t)\in w_i \text{ or } w_j, e_{st}=1))} \notag \\
        &=\frac{\frac{1}{A}\prod_{s\in w_i,t\in w_j} \PP(e_{st}=1)\prod_{s,t\in w_i, s,t\in w_j}\PP(e_{st}=1)}{\frac{1}{A}[\prod_{s\in w_i,t\in w_j} \PP(e_{st}=1) + \prod_{s\in w_i,t\in w_j} \PP(e_{st}=0)] \prod_{s,t\in w_i, s,t\in w_j}\PP(e_{st}=1)} \notag \\
        &=\frac{\prod_{s\in w_i,t\in w_j} \PP(e_{st}=1)}{\prod_{s\in w_i,t\in w_j} \PP(e_{st}=1)+\prod_{s\in w_i,t\in w_j} \PP(e_{st}=0)},
        \end{align*}
    where $A$ is the normalization factor of the joint probability density for samples in $\{w_i,w_j\}$.
\end{proof}

\textbf{Remark on the condition.} The assignment of classes for samples is subject to clustering. For instance, to categorize a group of sheep, various attributes like gender, age, health, and weight can be the basis for clustering. In our scenario to measure the likelihood of ``must-link'', what we indeed care about is the probability that the two clusters should be merged during the clustering, rather than the actual classes of these samples (this is the mission of oracles). After all, each class assignment corresponds to a specific meaning in real applications.

\textbf{Remark on the aggregation probability.} Previous studies often evaluate the likelihood of $c_m=c_n$ with heuristic strategies. Common strategies include using the distance between the central samples of two clusters or the closest distance between the clusters themselves to determine which cluster pairs to query. However, these methods that focus on a single sample from a cluster, may become less effective for clusters with irregular shapes, and Eq. ~\eqref{prob:ori} provides a better estimation that takes into account the influence of all samples.

\textbf{Remark on the query style.} A3S requires oracles to provide pairwise comparison results for each selected sample pair, which demands minimal domain-specific knowledge and is easier to implement \citep{xiong2016active}, especially when the number of classes is large. In contrast, traditional active learning necessitates that the oracle assigns specific labels to the selected samples or annotates them according to predefined rules \citep{deng2023counterfactual}, causing much heavier costs during the annotation process.

\subsection{Justification of Approximation}
\label{appendix:coro}

\textbf{Regarding the Approximation of Entropy. }   
If $(p+q)\ll N$, then 
\#
\Delta h &= \frac{p+q}{N}\log(p+q)-\frac{p}{N}\log p-\frac{q}{N}\log q \notag \\
&< \frac{p+q}{N}\log(p+q) \notag \\
&< \frac{p+q}{N}\log\frac{N}{p+q}. \notag
\#
Note that $\HH(\Omega)=\sum_{s} \frac{s}{N}\log\frac{N}{s}$, where $s$ is the cluster size like $p$ and $q$. Together with $p+q\ll N$, we have 
\$
\Delta h \ll \HH(\Omega) < \HH(\Omega) + \HH(C).
\$

\textbf{Regarding the Mutual Information. }
If the purity of $w_1$ and $w_2$ is 1, and they belong to the same class $c_\tau$, then we have
\$
\PP(w_1\cap c_\tau)=\PP(w_1),\quad  \PP(w_2\cap c_\tau)=\PP(w_2), \quad \PP(w_{1,2}\cap c_\tau)=\PP(w_{1,2})=\PP(w_1)+\PP(w_2).
\$

Hence, and we have $\mathbb{I}(\Omega^{*}; C) =\mathbb{I}(\Omega; C)$. This is because    
\#
\mathbb{I}(\Omega^{*}; C) & =\mathbb{I}(\Omega; C) + \PP(w_{1,2}\cap c_\tau)\log\frac{\PP(w_{1,2}\cap c_\tau)}{\PP(w_{1,2})\PP(c_\tau)} \notag\\
& \qquad - \PP(w_{1}\cap c_\tau)\log\frac{\PP(w_{1}\cap c_\tau)}{\PP(w_{1})\PP(c_\tau)} - \PP(w_{2}\cap c_\tau)\log\frac{\PP(w_{2}\cap c_\tau)}{\PP(w_{2})\PP(c_\tau)} \notag \\
 & =\mathbb{I}(\Omega; C) + \PP(w_{1,2})\log\frac{1}{\PP(c_\tau)}-\PP(w_1)\log\frac{1}{\PP(c_\tau)}-\PP(w_2)\log\frac{1}{\PP(c_\tau)}\notag\\
 &=\mathbb{I}(\Omega; C). \notag
\#

When the purity is less than 1.0, considering Eq.~\eqref{eq:118}, and fact that $\Delta h \ll \min\{\HH(\Omega),\HH(C)\}$, we have
\#
|\mathbb{I}(\Omega^*; C) - \mathbb{I}(\Omega; C)| < (1.0586 - \min\{t_1, t_2\}) \cdot \Delta h \ll \min\{\HH(\Omega),\HH(C)\}.
\#

\subsection{Fast Transitive Inference}
\label{appendix:exp:fti}
FTI involves expanding the set of constraints based on the information within the original set. For example, if $(x_1,x_2)$ and $(x_2,x_3)$ are must-link constraints, and $(x_1,x_4)$ is a cannot-link constraint, it implies that $(x_1,x_3)$ must be a must-link constraint, while $(x_2,x_4)$ and $(x_3,x_4)$ must be cannot-link constraints.

To facilitate the process, we present an efficient method, Fast Transitive Inference (FTI), which is designed to discover the transitive closure for the constraint set in \Name{}. The implementation is shown in Algorithm~\ref{algo:fti}. 

The performance of FTI is guaranteed by Theorem~\ref{lemma:tic}.

\begin{theorem}[Completeness of FTI] 
\label{lemma:tic}
By executing the FTI algorithm every time a new human query is made, we can always get the latest transitive closure.
\end{theorem}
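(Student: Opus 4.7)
The plan is to proceed by induction on the number of human queries processed so far, using the following structural characterization of the transitive closure: under the two inference rules (ML is an equivalence relation; if $a$ ML $b$ and $b$ CL $c$ then $a$ CL $c$), a state matrix is a transitive closure iff its ML-$1$ entries form the connected components of the ML-graph and every CL-$(-1)$ entry $(p,q)$ reflects the existence of some $p'\in[p]_{\mathrm{ML}}$, $q'\in[q]_{\mathrm{ML}}$ queried as cannot-link. So I would first show that the inductive hypothesis implies, for every sample $i$, that $\mathcal{ML}_i:=\{j:S[i,j]=1\}$ is exactly the ML-class of $i$ (minus $i$) and that $\mathcal{CL}_i:=\{j:S[i,j]=-1\}$ equals the union of all ML-classes cannot-linked to $i$'s class.

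The base case is trivial since $S\equiv 0$ is the closure of the empty constraint set. For the inductive step, assume the closure property holds after $n$ queries and suppose the $(n{+}1)$-st query returns a relation on $(s,t)$; writing the new value into $S[s,t]$ and $S[t,s]$ first, I would split into two cases. If the new constraint is ML, the new ML-classes are obtained from the old ones by merging $A:=[s]$ and $B:=[t]$, and the new CL-neighbours of $A\cup B$ are the union of the old CL-neighbours of $A$ and of $B$; if the constraint is CL, the ML-classes are unchanged and the only new CL-pairs are exactly $A\times B$. The key observation is that every new ML or CL pair forced by the $(n{+}1)$-st query has a derivation that passes through both $s$ and $t$, so it suffices to re-close at these two pivots.

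Next I would trace the two iterations of the FTI loop and verify they implement exactly those updates. Processing $i=s$ first: at this moment $\mathcal{ML}_s$ already contains the old class $A$ together with $t$ (just inserted), and $\mathcal{CL}_s$ already contains the old CL-neighbours of $A$ together with $t$ in the cannot-link case. The two assignments in the loop then (i) make $t$ ML with every element of $A$ (or leave ML untouched in the CL case) and (ii) copy the CL-neighbourhood of $s$ onto $t$. When the loop then processes $i=t$, the quantities $\mathcal{ML}_t$ and $\mathcal{CL}_t$ picked up from $S$ already reflect these edits, so they coincide with $A\cup B$ (in the ML case) or $B$ plus the enlarged CL-neighbourhood. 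The two assignments at this step complete the symmetric propagation, giving precisely the closure described in the previous paragraph. A short argument confirms that no further application of rules $1$ or $2$ can produce any additional entry, because any such new entry would have to involve a sample outside $A\cup B$ that is related through $s$ or $t$, and all such relations have been enumerated during the two passes.

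The main obstacle I anticipate is the order-sensitivity of the loop: the reason the algorithm can afford to look up $\mathcal{ML}$ and $\mathcal{CL}$ at only two pivots is that the first pass silently rewrites $S$ so that, by the time the second pivot is processed, $\mathcal{ML}_t$ and $\mathcal{CL}_t$ already contain contributions from the $A$-side. I would therefore state and prove a small invariant, maintained between the two iterations, that precisely describes the partial state of $S$ after processing $i=s$, and use it to justify the values of $\mathcal{ML}_t,\mathcal{CL}_t$ at the start of the second iteration. With this intermediate invariant in place, the closure property after the second iteration follows by direct comparison with the explicit description of the closure of the first $n{+}1$ constraints, completing the induction.
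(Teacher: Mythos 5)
Your plan is correct and follows essentially the same route as the paper's proof: a case split on must-link versus cannot-link, combined with the observation that only pairs involving the ML/CL neighbourhoods of the two queried samples can change, and a trace of the two loop passes in which the first pass enriches $S$ so that the second pass reads the already-merged neighbourhoods. Your version is somewhat more carefully scaffolded (explicit induction on the query count and an explicit invariant between the two iterations, both of which the paper leaves implicit), but the underlying argument is the same.
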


\begin{proof}[Proof of Theorem \ref{lemma:tic}]
    Suppose the must-link sample sets with samples $i$ and $j$ are denoted as $G_i$ and $G_j$, respectively. And the sample sets that are cannot-link with $i$ and $j$ are denoted as $g_i$ and $g_j$. When the constraints between $i$ and $j$ are queried, only the constraints of sample pairs within $\{G_{i}, G_{j}, g_{i}, g_{j}\}$ may change, as no must-link constraints are built between them and the rest samples. We discuss the two cases where $i$ and $j$ are must-linked or cannot-linked:
    \begin{itemize}[leftmargin=*]
        \item[1] $(i,j)$ is must-linked. 
        First, FTI updates the constraints for sample pairs related to $i$, then $G_{i}^{\prime}=G_{i}\cup \{j\}$, $G_{j}^{\prime}=G_{j}\cup G_{i}$, and $g_{j}^{\prime}=g_{j}\cup g_{i}$. The constraints between $i$ and $G_j,g_j$ have not yet been updated. Then, FTI updates the constraints for sample pairs related to $j$, then we have $G_{i}^{\prime}=G_{i}\cup G_{j}$ and $g_{i}^{\prime}=g_{i}\cup g_{j}$. And this means all ml constraints between $G_{i}$ and $G_{j}$, and cl constraints between $\{G_{i}, G_{j}\}$ and $\{g_{i}, g_{j}\}$ are updated and stored in the state matrix $S$.

        \item[2] $(i,j)$ is cannot-linked.
        First, FTI updates the constraints for sample pairs related to $i$, then $g_{i}^{\prime}=g_{i}\cup \{j\}$ and $g_{j}^{\prime}=g_{j}\cup G_{i}$. Then FTI updates the constraints for sample pairs related to $j$, then we have $g_{i}^{\prime}=g_{i}\cup G_{j}$. And this means all cl constraints between $G_{i}$ and $G_{j}$ are updated and stored in $S$.
    \end{itemize}
    Combining these two scenarios, we finish the proof of Theorem \ref{lemma:tic}
\end{proof}

\section{Implementation Details of \Name{}}
\label{appendix:imple}
\subsection{Estimating Pairwise Probability} \label{appendix:prob-esti}
Following the setup in \cite{liu2022mpc}, we use isotonic regression to learn a regressor that estimates the pairwise posterior probability $\PP(e_{ij}=1|d_{ij})$, where $d_{ij}$ is the Euclidean distance between samples $i$ and $j$. The estimation encompasses three steps: 
\begin{itemize}[leftmargin=*]
\item[(1)] Since ground truth labels are unavailable, we employ K-means clustering to generate pseudo labels for the samples. Alternatively, Fast Probabilistic Clustering (FPC) can also be used for this purpose, where similarity values between samples serve as a rudimentary approximation of pairwise probabilities. In practical applications, cosine similarity is particularly well-suited for FPC.
\item[(2)] Next, we generate the training data for isotonic regression. We utilize the k-nearest neighbors of each sample to form sample pairs. The independent variable of isotonic regression is the Euclidean distance between two samples in the sample pair. Each pair is labeled as 0 or 1, indicating whether the two samples in each pair share the same pseudo label. The label serves as the dependent variable of isotonic regression.
\item[(3)] Finally, we conduct isotonic regression on the gathered data, learning a function that maps the Euclidean distance between two samples to the pairwise probability, i.e., $\PP(e_{st}=1|d_{st})$.
\end{itemize}

\citet{liu2022mpc} also proposed Graph-context-aware refinement to enhance the quality of the posterior probability, but it is not an essential component of \Name{}. Therefore, we did not include it in our experiments. However, incorporating them would further enhance the performance of \Name{}, as they can improve the quality of the estimated merging probability between cluster pairs.

\subsection{Calculating Aggregation Probability}
We derive the aggregation probability based on the condition that the purity of two clusters are 1.0. 
In practical applications, the assumption that both clusters have a purity of 1.0 might not strictly hold. Specifically, the pairwise probability between major samples and outlier samples from $w_i$ and $w_j$ will degrade the aggregation probability (i.e., making the result biased towards 0). To deal with this issue, 
we propose a variant of $\PP(c_m=c_n)$, denoted as $\PP_{\knn}(c_m=c_n)$. This variant considers only the sample pairs within the k-nearest neighbors. Assume that $|w_i|\le|w_j|$, the formulation is as follows:
\small
\begin{equation}\label{prob:mul}
    \PP_{\knn}(c_m=c_n)=\frac{\prod\limits_{s\in w_i  t\in \knn_{w_j}(s)} \PP(e_{st}=1)}{\prod\limits_{s\in w_i  t\in \knn_{w_j}(s)} \PP(e_{st}=1) + \prod\limits_{s\in w_i  t\in \knn_{w_j}(s)} \PP(e_{st}=0)}, 
\end{equation}
\normalsize
where $\knn_{w_j}(s)$ denotes the nearest neighbors of $s$ in $w_j$. Empirical results show that \Name{} is not sensitive to the number of neighbors, and we consider 4 neighbors for each sample in our experiments.

\subsection{Feature Extraction}
The facial characteristics in the \texttt{Humbi-Face} dataset are extracted using a face recognition model. Similarly, the \texttt{MK20} and \texttt{MK100} datasets utilize a person re-identification model for body feature extraction \citep{liu2022mpc}. In the case of \texttt{MS1M-10k} and \texttt{MS1M-100k}, facial features are extracted using the Arcface model~\citep{deng2019arcface}. The \texttt{Handwritten} dataset encompasses four types of features: average pixel features, Fourier coefficient features, Zernike moments features, and Karhunen-Loève coefficient features.

\subsection{Hyperparameter Setting}
The implementation of \Name{} involves two hyperparameters: the threshold for density test and the number of neighbors considered in Fast Probabilistic Clustering. We report our choice of these two parameters in Table~\ref{tab:threshold}. In particular, the hyper-parameter $\tau$ is used to filter out the clusters with low density, and it is set to be slightly lower than the average density among all clusters. For the selection of $\tau$, we first compute density using the formula in Eq. \eqref{eq:dt} for all clusters, then calculate the mean value as $d$, and set $\tau$ as $d-0.1$. The final results of A3S are not sensitive to this value, and perturbing it to $d$ or $d-0.05$ has a negligible impact on the final clustering result.
\begin{table}[ht]
	\centering
	\caption{Hyperparameter setting of \Name{}. }
	\resizebox{0.8\textwidth}{!}{
		\begin{tabular}{c|cccccc}
			\hline 
                dataset & \texttt{MK20} & \texttt{MK100} & \texttt{Handwritten} & \texttt{Humbi-Face} & \texttt{MS1M-10k} & \texttt{MS1M-100k} \\
			$\tau$ & 0.5 & 0.8 & 0.8 & 0.5 & 0.5 & 0.5 \\
                \hline 
                neighbors & 50 & 50 & 50 & 50 & 50 & 50 \\
			\hline 
	\end{tabular}}
	\label{tab:threshold}
	\vspace{-20pt}
\end{table}

\subsection{Computing Resources}
We utilize a [GeForce RTX 3090 Ti] for feature extraction using DNN models. For the implementation of baseline methods and \Name{}, we perform the experiments on a machine equipped with an Intel(R) Xeon(R) Platinum 8163 CPU @ 2.50GHz.

\section{More Experiment Results}
\label{appendix:exp_result}
In our ablation study, we examine the impact of varying the adaptive cluster number on \Name{}. We define this number as $r \cdot k$, where $k$ is the adaptive number determined by FPC and $r$ is a ratio factor set to values in the set {1, 1.5, 2}. Utilizing the K-means algorithm, we generate initial clustering with the cluster number $r\cdot k$, and present the corresponding results of \Name{} in Figure~\ref{fig:adapt_abl}. It's observed that the performance of the initial clustering is relatively sensitive to the adaptive cluster number. However, the selection of this adaptive number has a minimal effect on the number of queries required to achieve the desired clustering outcome. This demonstrates \Name{}'s robustness to variations in the adaptive cluster number.

\begin{figure*}[h]
	\centering
        \subfigure{
        \begin{minipage}[t]{0.24\linewidth}
            \centering
            \includegraphics[width=1\textwidth]{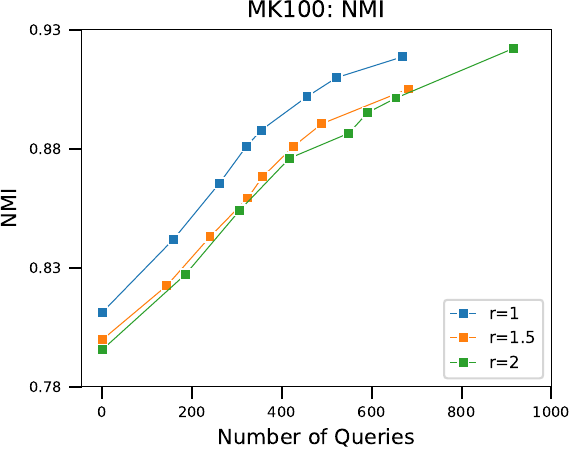}\\
        \end{minipage}%
        }
        \subfigure{
        \begin{minipage}[t]{0.24\linewidth}
            \centering
            \includegraphics[width=1\textwidth]{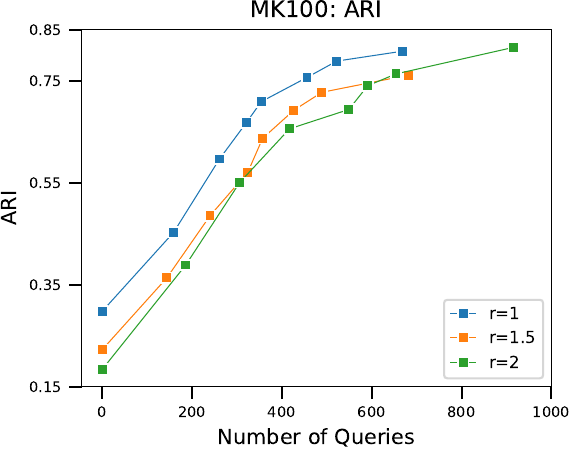}\\
        \end{minipage}%
        }
        \subfigure{
        \begin{minipage}[t]{0.24\linewidth}
            \centering
            \includegraphics[width=1\textwidth]{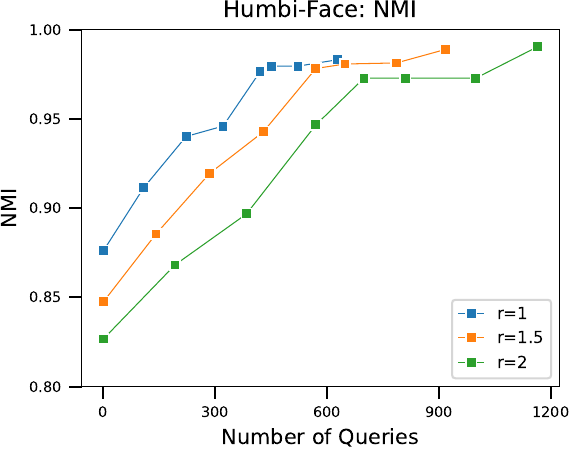}\\
        \end{minipage}%
        }
        \subfigure{
        \begin{minipage}[t]{0.24\linewidth}
            \centering
            \includegraphics[width=1\textwidth]{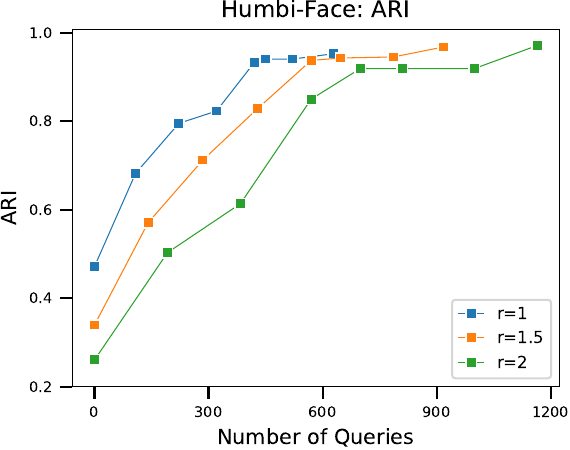}\\
        \end{minipage}%
        }
        \vspace{-10pt}
	\caption{The NMI and ARI performance of \Name{} when the adaptive cluster number is set as $r\cdot k$, where $k$ is the adaptive number generated by FPC and $r$ is the ratio factor.
	}
	\label{fig:adapt_abl}
	\vspace{-12pt}
\end{figure*}

\end{document}